\documentclass[10pt, conference, letterpaper]{ IEEEtran}
\IEEEoverridecommandlockouts

\usepackage{makecell}
\usepackage[utf8]{inputenc} 
\usepackage[T1]{fontenc}    
\usepackage{url}            
\usepackage{booktabs}       
\usepackage{amsfonts}       
\usepackage{nicefrac}       
\usepackage{microtype}      
\usepackage{xcolor}         
\usepackage{amsmath}
\usepackage{graphicx}
\usepackage{amssymb}
\usepackage{mathtools}
\usepackage{amsthm}
\usepackage{algorithm}
\usepackage{algorithmic}
\usepackage{comment}
\usepackage{cite}
\usepackage{textcomp}
\usepackage{dblfloatfix}

\usepackage[hidelinks]{hyperref} 
\def\BibTeX{{\rm B\kern-.05em{\sc i\kern-.025em b}\kern-.08em
    T\kern-.1667em\lower.7ex\hbox{E}\kern-.125emX}}
\begin{document}

\ifodd 1
\newcommand{\rev}[1]{\textcolor{blue}{#1}}
\newcommand{\revw}[1]{\textcolor{red}{#1}}
\newcommand{\revg}[1]{\textcolor{cyan}{#1}}
\newcommand{\revh}[1]{#1}
\newcommand{\com}[1]{\textbf{\color{red} \left(Comment: #1\right) }}
\newcommand{\comg}[1]{\textbf{\color{blue} \left(COMMENT: #1\right)}}
\newcommand{\response}[1]{\textbf{\color{blue} \left(RESPONSE: #1\right)}}
\else
\newcommand{\rev}[1]{#1}
\newcommand{\revh}[1]{#1}
\newcommand{\revw}[1]{#1}
\newcommand{\com}[1]{}
\newcommand{\comg}[1]{}
\newcommand{\response}[1]{}
\fi

\newtheorem{limitation}{Limitation}
\newtheorem*{question}{Main Question}
\newtheorem{remark}{Remark}
\newtheorem{assumption}{Assumption}
\newtheorem{theorem}{Theorem}
\newtheorem{definition}{Definition}
\newtheorem{proposition}{Proposition}
\newtheorem{lemma}{Lemma}

\def\p{\boldsymbol{p}}
\def\z{\boldsymbol{z}}
\def\h{\boldsymbol{h}}
\def\y{\boldsymbol{y}}
\def\V{\mathcal{V}}
\def\u{\boldsymbol{u}}
\def\v{\boldsymbol{v}}
\def\W{\boldsymbol{W}}
\def\g{\boldsymbol{g}}
\def\e{\boldsymbol{e}}
\def\thetab{\boldsymbol{\omega}}
\newcommand{\KL}{D_{\mathrm{KL}}}
\def\fhat{\hat{f}}
\def\J{\boldsymbol{J}}
\def\e{\boldsymbol{e}}
\def\pt{\tilde{\boldsymbol{p}}}
\def\r{\boldsymbol{r}}

\title{ZeroLock: Concurrent Memory-Efficient LLM  Training via Modular Update Decoupling}

\author{
    \IEEEauthorblockN{Wentao Dai$^*$,  Xuanran Li$^*$, Yuxiang Zhang, Ming Tang, Chao Huang}
    \thanks{Wentao Dai,   Yuxiang Zhang, and Ming Tang are with the Department of Computer Science and Engineering,
    Southern University of Science and Technology,
   China. Xuanran Li is with the Department of Mathematics,
    Southern University of Science and Technology,
   China. Chao Huang is with the School of Computing, Montclair State University, New Jersey, USA.
    Email: \{12311217,12312110,12410823\}@mail.sustech.edu.cn,  tangm3@sustech.edu.cn, huangch@montclair.edu. (Corresponding Author: Ming Tang)}
  \thanks{* Equal Contribution}
}

\maketitle

\begin{abstract}
Large language model (LLM) fine-tuning at the edge  adapts the model to scenario-specific data  while preserving privacy. Although existing studies proposed pipeline parallelism  to address the limited memory and computing resources of edge devices, they commonly rely on backpropagation (BP) training, which has a fundamental limitation of update locking and could experience severe throughput and memory bottlenecks. In this work, we propose a BP-free algorithm, called ZeroLock, that decouples the model updates into independent chunk updates by local objective construction. It breaks the update locking of BP and hence can improve throughput at the algorithm level and lower memory usage by reducing activation storage. To the best of our knowledge, we provide the first theoretical framework for such local objective construction-based approaches under general model chunk division by  mapping  local objectives to the global objective. We prove that ZeroLock has a convergence rate of $\tilde{\mathcal{O}}(1/\sqrt{T})$, which differs from BP only by polylogarithmic factors. We design a system for ZeroLock and build real-world prototypes, incorporating techniques such as early forwarding  and failure recovery for efficient and robust implementation. Experiments on the prototype show that compared to BP-based baselines, ZeroLock reduces the memory by  26.5\% and improves throughput by  4.9\%. 
\end{abstract}
\begin{IEEEkeywords}
Edge intelligence, collaborative edge training system, pipeline parallelism, backpropogation-free training.
\end{IEEEkeywords}

\section{Introduction}
Edge intelligence deploys large language models (LLMs) at the network edge, achieving privacy-preserving and providing real-time inference services. In practice, it is important to fine-tune LLMs at the edge for two reasons. First, data distributions are usually scenario- and individual-specific, making it important to adapt the models to maintain inference accuracy. Second, data privacy concern and possibly frequent  model adaptation requirements make it difficult to upload the data to the cloud for fine-tuning. There are many such examples requiring both model adaptation and privacy preserving. For example, test-time training \cite{tandon2025end} adapts LLMs to maintain  user-specific memory for personalized service provision. Electroencephalogram signal-based tasks \cite{alghamdi2025cross} (e.g., emotion recognition, sleep staging) require model adaptation for individuals, as those signals demonstrate strong individual-dependent patterns. 

However, devices at the network edge usually have limited computation and memory resources, so it is difficult for a device or a GPU to fine-tune the entire LLM. To address this, existing studies have proposed \emph{pipeline parallelism} approaches \cite{huang2019gpipe,narayanan2019pipedream,confidant2025chen}. The main idea is to partition the model vertically into chunks of consecutive layers. The update of each model chunk corresponds to a \emph{stage} and is assigned to a different device, and these devices collaborate to update the chunks in a pipeline fashion. GPipe \cite{huang2019gpipe} is a typical approach and processes micro-batches sequentially.
1F1B  \cite{narayanan2019pipedream} interleaves forward pass and backward update  to improve pipeline utilization. Building upon 1F1B, PipeDream \cite{narayanan2019pipedream, narayanan2021pipedream2bw} enables asynchronous pipeline training via weight stashing. Recent studies \cite{fan2021dapple,narayanan2021megatron,li2021chimera,qi2024zerobubble} reduce bubbles in pipeline through placement search, virtual or bidirectional stages, and finer backward decomposition. Other studies \cite{lian2025ucp,gandhi2026moevement,wu2026pipemorph,chen2025collapipe,ryabinin2023swarm} focused on pipeline planning considering preemptions, stragglers, device heterogeneity, or communication efficiency.  Confidant \cite{confidant2025chen} implements 1F1B on smartphones.  

\begin{figure}
    \flushleft
    \includegraphics[height=1.8cm]{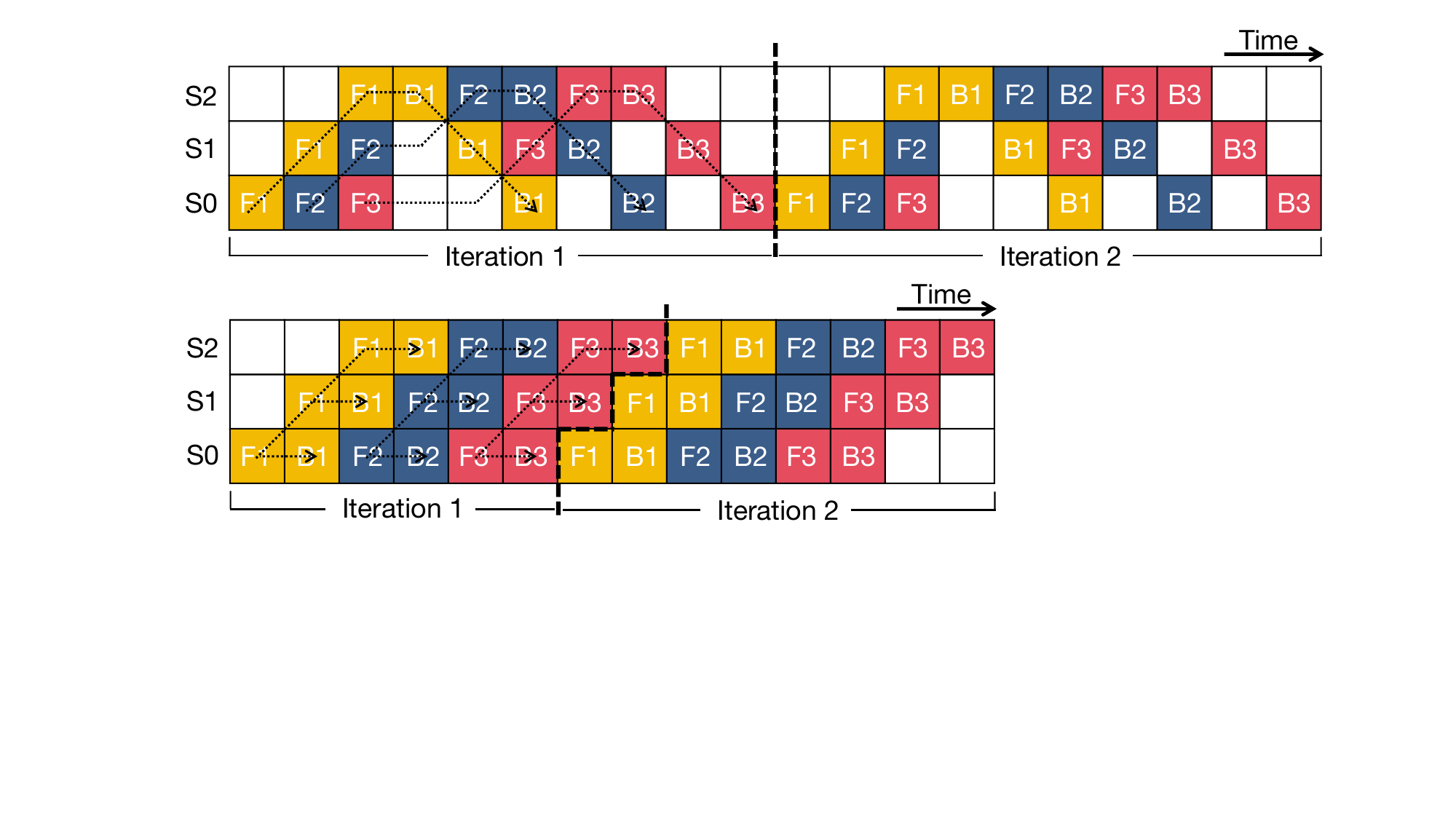}\vspace{-1.5mm}\\
    \centering(a)\vspace{-2mm}\\
    \flushleft\includegraphics[height=1.8cm]{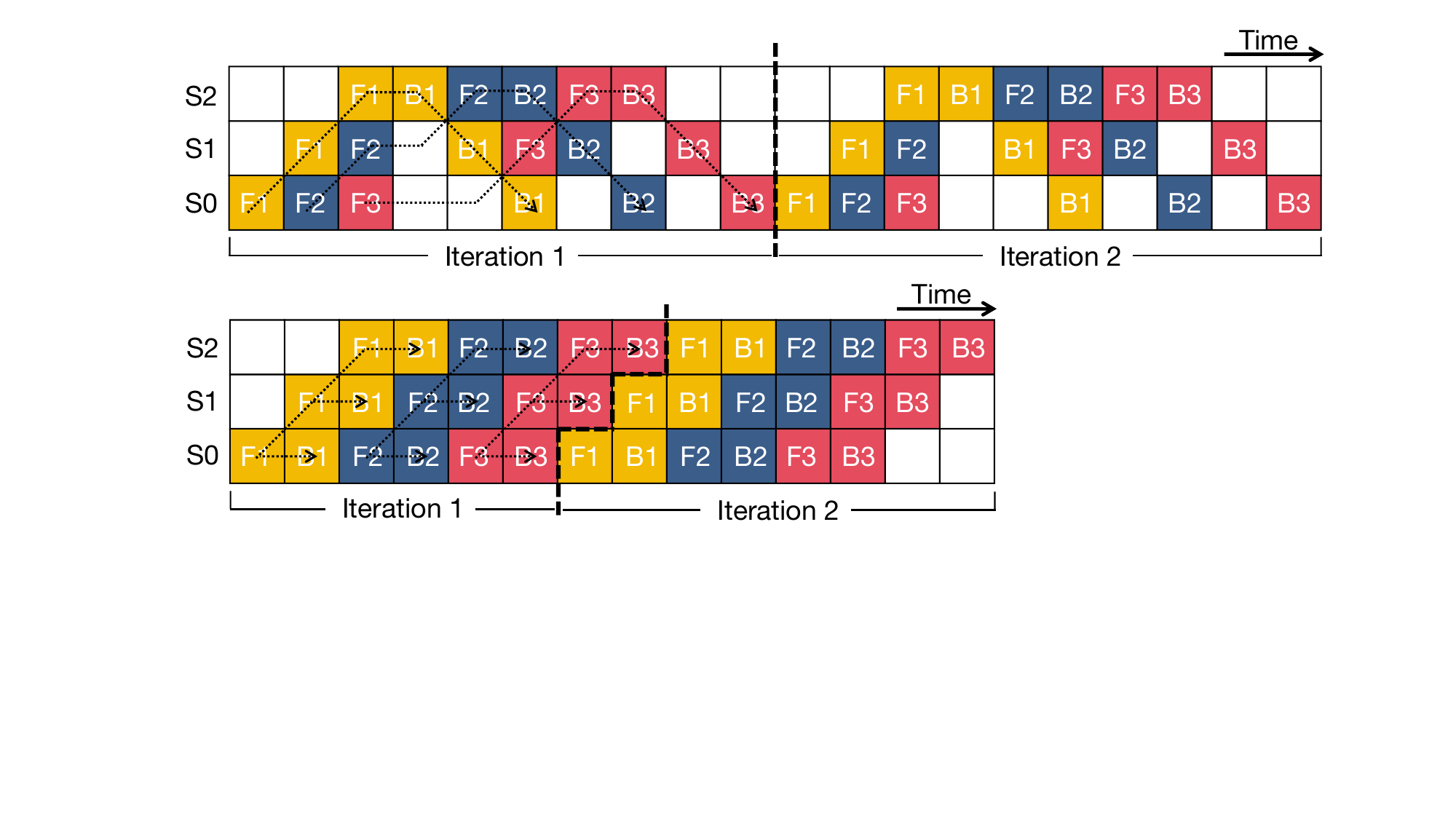}\vspace{-1.5mm}\\
    \centering (b)\\
    \includegraphics[height=1.9cm]{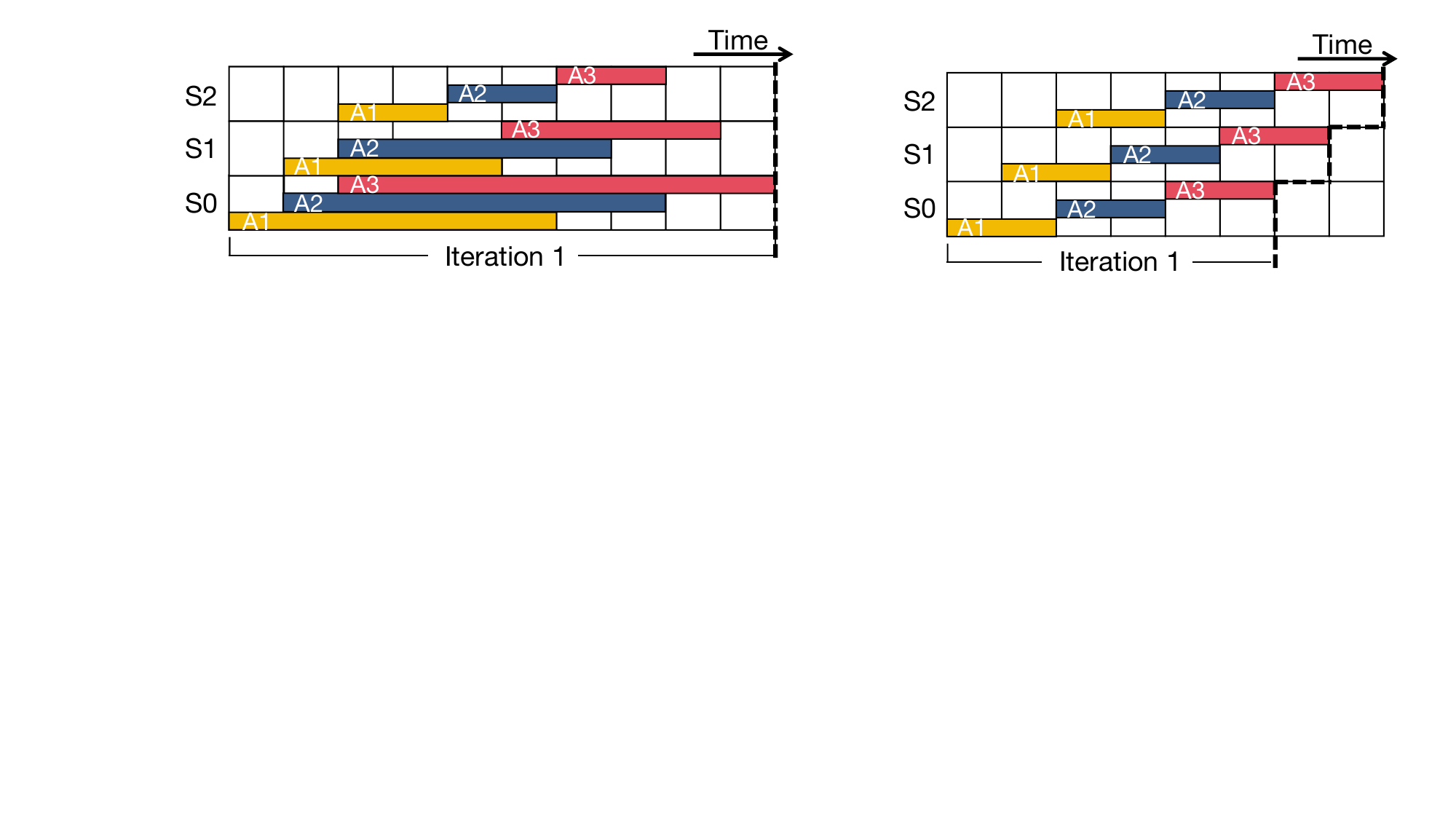}\includegraphics[height=1.9cm]{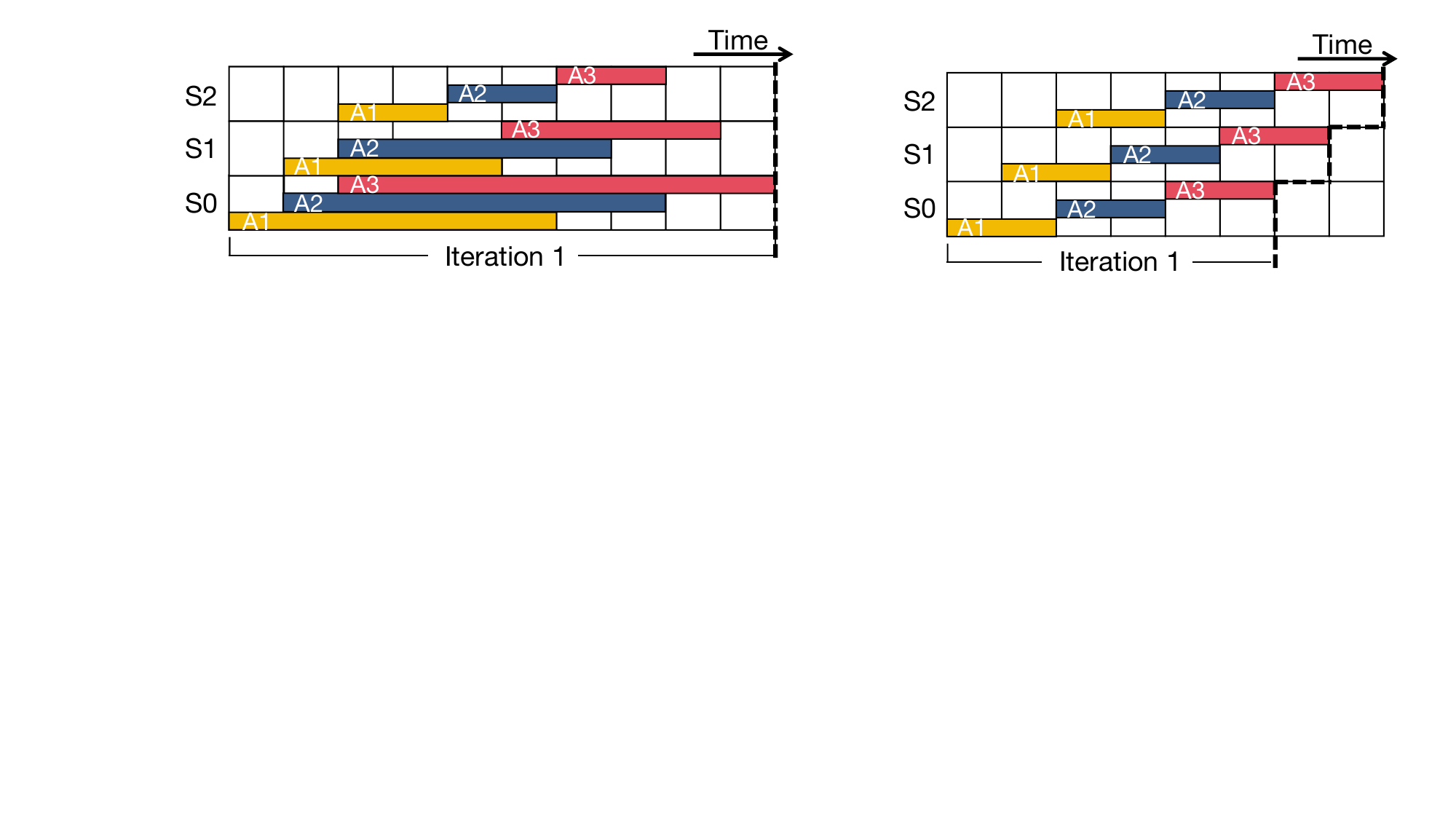}\vspace{-1.5mm}\\
    \quad(c)\qquad\qquad\qquad\qquad\quad\qquad(d)\vspace{-1mm}
    \caption{Throughput with (a) BP baseline  and (b) ZeroLock; memory with (c) BP baseline and (d) ZeroLock. In this figure, we use the typical 1F1B framework as the BP baseline, because many existing studies (e.g., \cite{confidant2025chen}) are built upon it. S0, S1, and S2 are stages, each corresponding to the updates of a model chunk and assigned to a device for execution. F$n$, B$n$, and A$n$ denote the forward pass, backward pass, and activations of the $n$-th microbathch, repetitively. The stripes in (c) and (d) indicate the duration that the corresponding activations are kept in memory. For example, 
    at time slot 3 of S0, A1-A3 are kept in memory in (c), while only A2 is kept in memory in (d).}
    \label{fig:framework}
\end{figure}

Most of these pipeline parallelism works (e.g.,  \cite{huang2019gpipe, narayanan2019pipedream, narayanan2021pipedream2bw, fan2021dapple,narayanan2021megatron,li2021chimera,qi2024zerobubble,lian2025ucp,gandhi2026moevement,wu2026pipemorph,confidant2025chen,chen2025collapipe,borzunov2023petals,ryabinin2023swarm}) rely on  backpropagation (BP) training and focus system-level scheduling optimization. \textbf{BP training has a fundamental limitation of update locking}\cite{pmlr-v80-huo18a,yedepth,rongguang2026survey}. That is, the model update contains a forward pass followed by a backward pass, so the update of upstream layers  needs to wait for the forward and backward computation of downstream layers. As a result, the approaches based on BP could have  crucial limitations:
\begin{itemize}
    \item \textbf{Throughput Bottleneck:} As shown in Fig. \ref{fig:framework} (a), due to  update locking, bubbles arise because an   upstream stage (e.g., S0) needs to wait for the backward pass of all downstream stages (e.g., S1, S2)  to accomplish its own backward pass. Although existing works (e.g., \cite{narayanan2019pipedream, narayanan2021pipedream2bw}) have proposed approaches to eliminate the bubbles, the updates of stages remain coupled due to the BP training, and hence the fundamental issue has not been resolved. 
    \item \textbf{Memory Bottleneck:} As shown in Fig. \ref{fig:framework} (c), due to  update locking,  an upstream stage (e.g., S0) needs to keep its activations  until all its downstream stages and itself (e.g., S0, S1, S2) accomplish their backward passes. This leads to significant memory waste for upstream stages.
\end{itemize}

The pervasive use of BP training and the resulting throughput and memory bottlenecks motivate the main question:
\begin{question}
   How can we design a system that fundamentally overcome the throughput and  memory bottlenecks at the algorithm level by breaking the update locking in BP?
\end{question}
BP-free training was proposed to break the update locking in BP. There are two categories. (I) Backward Gradient Estimation. For example, direct feedback alignment \cite{nokland2016direct,NEURIPS2019_f387624d} bypasses the chain rule by propagating target errors directly to all layers. Zero-order optimization \cite{NEURIPS2023_a6278101,wangcurvzo} computes perturbation-based loss differences during the forward pass to estimate gradients. However, this category usually leads to either significant model accuracy degradation for complex tasks or significant computing overhead. (II) Objective Reconstruction. For example, NoProp \cite{li2026noprop} reframes neural modules as independent denoising units that map noisy target embeddings back to clean targets. Predictive coding \cite{millidge2022predictive} alternates between minimizing chunk-wise prediction errors and updating weights using the local errors. {Local objective construction} (e.g., depth-progressive monotonic learning \cite{yedepth}) equips each layer with an independent local objective, allowing purely local gradient computation. Within category (II), \textbf{local objective construction  decouples the modular updates across stages} and has comparable model accuracy to BP approaches,  so it is a promising candidate to address the Main Question. Specifically, since each chunk is updated independently using its local loss, its update does not need to wait for the forward and backward passes  of its downstream chunks, so bubbles due to waiting can be removed; meanwhile, each chunk needs to keep its activations only until its own updates, reducing memory usage.

In this work, based on local objective construction-based BP-free approaches, we aim to propose a pipeline parallelism LLM training system to fundamentally overcome the throughput and memory bottlenecks. However, this design is not straightforward and needs to address the following questions:
\begin{itemize}
    \item[Q1] How to fine-tune LLM with local objective construction?
    \item[Q2] Does the modular decoupling of local objective construction theoretically harm the model convergence?
    \item[Q3] How to design and construct real-world prototype systems (for both multi-GPU server and Android phone scenarios) to achieve efficient and robust implementation?
\end{itemize}

Answering Q1 requires concrete design to incorporate LLM-specific characteristics (e.g., token sequence, low-rank adaptation for efficient fine-tuning) into local objective construction framework. Answering Q2 is very  challenging. This is because the modular update decoupling breaks global objective into local objectives of model chunks, while  recent studies lack frameworks to quantifying how optimizing the local objectives improves    the global objective. Answering Q3 is non-trivial due to the real-world engineering requirements of light-weight, high-throughput, and robust implementation. 

Existing works  tried to address one or two of Q1--Q3. PPLL \cite{guo2024ppll} places gradient-isolated vision blocks on different GPUs and transfers their features for block-wise updates. FluidPipe \cite{aljahdali2025fluidpipe} focuses on LLM and adds an auxiliary task head to the first part of a two-stage model, avoiding the local updates on the first part from waiting for the second part's gradient. SCPL \cite{ho2026scpl} decouples BP via per-segment supervised contrastive losses for synchronized multi-GPU model parallelism. However, these works \cite{guo2024ppll,aljahdali2025fluidpipe,ho2026scpl} considered hidden state alignment for Q1, failing to characterize task-specific information in local objective construction, and they did not address Q2. For Q3, although they propose high-level pipeline logic, their designs remain at a conceptual level, lacking critical implementation details such as continuous cross-batch pipelining, stage I/O queues, and explicit RPC primitives. Meanwhile, they did not provide deployable prototype for real-world execution. Although recent work LoPT \cite{shi2026lopt} addressed Q2, its analysis limits to two chunks and fails to provide a general analytical framework for connecting local objective to global objective.

We address those challenges and answer Q1--Q3. 
Our main contributions are summarized as follows:
\begin{itemize}
    \item ZeroLock Algorithm: To answer Q1, we propose a local objective construction-based BP-free algorithm, called ZeroLock, for LLM fine-tuning, which incorporates  low-rank adaptation (LoRA) \cite{hu2022lora} and LLM-compatible readout head and loss function. It breaks the update coupling across model chunks, and hence reduces the pipeline bubbles and activation storage at the algorithm level. 
    \item Theoretical Analysis: To answer Q2, we  establish theoretical equivalences to represent the local optimum of model chunks in a form of global objective. Then, the local updates of model chunks can be equivalently represented as global updates, with which the global convergence can be derived. To the best of our knowledge, this is the first analytical framework for the local objective construction algorithms under general model chunk division, which provides a systematic approach for analyzing global convergence given  decoupled local updates. We prove that ZeroLock has a convergence rate of $\tilde{\mathcal{O}}(1/\sqrt{T})$,  which differs from BP by only polylogarithmic factors.
    \item Real-World System Design: To answer Q3, based on ZeroLock, we design  systems and build prototypes for both multi-GPU server and Android phone scenarios. This system can achieve concurrent throughput (see Fig. \ref{fig:framework} (b)) and lower  memory usage by reducing activation storage (see Fig. \ref{fig:framework} (d)). Techniques such as early forwarding  and failure recovery are proposed to ensure light-weight, high-throughput, and robust implementation. To the best of our knowledge, we build the first  prototype on  Android system for local objective construction-based BP-free algorithm.
    \item Evaluation on Real-World Prototype: Experiments on multi-GPU server show that when compared with BP-based  baselines,  ZeroLock system reduces the memory usage by 26.5\% and  improves the  throughput by   4.9\%. The fine-tuning of TinyLlama on Android phones experiences a peak PSS of less than 4000 MiB,  a battery temperature of around 37$^\circ$C, and a wall-clock time of 1644.1 seconds, indicating that the implementation is practically feasible. Our code is available at 
\url{https://anonymous.4open.science/r/unlock_trainer-105B}.
\end{itemize}
The rest of this paper is organized as follows. Sections \ref{sec:algorithm} and \ref{sec:system} present ZeroLock algorithm and its system design, respectively. Experiments are  in Section \ref{sec:experiments}. Section \ref{sec:conclusion} concludes this work. 

\section{ZeroLock Algorithm and  Analysis}\label{sec:algorithm}


\subsection{ZeroLock Algorithm}\label{subsec:unlok}

Consider an LLM with an embedding operation $E(x)$ and a set of transformer layers $\mathcal{L}$. We introduce LoRA \cite{hu2022lora} to reduce the   trainable parameters in fine-tuning. Let $w_l\in\mathbb{R}^{d_1\times d_2}$ denote the base weights of the $l$-th transformer layer, which is frozen during fine-tuning. Let $A_l\in\mathbb{R}^{d_3\times d_2}$ and $B_l\in\mathbb{R}^{d_1\times d_3}$ denote the trainable low-rank decomposition matrices in LoRA. Then, the parameter updates of  transformer layer $l\in\mathcal{L}$ is represented by $\Delta w_l = B_lA_l$, so the  weights after fine-tuning are $w_l+\Delta w_l$. To achieve modular update decoupling in LLM fine-tuning, we partition the entire model into model chunks and introduce local objectives to these chunks for independent updates.\footnote{The main idea is inspired by the local objective construction approach in  \cite{yedepth}. Different from \cite{yedepth} on classification tasks,  ZeroLock  handles LLM fine-tuning by incorporating LoRA and LLM-compatible head and loss design.}

\textbf{Chunk Partitioning:} Consider an LLM split into an embedding operation  $E(\cdot)$ and  $K$  chunks, where  chunk $k$ contains one or multiple consecutive transformer layers, denoted by set $\mathcal{L}_k$. Let $W_k=(w_l~|~ l\in \mathcal{L}_k)$ and $\Delta W_k=(\Delta w_l~|~l\in\mathcal{L}_k)$ denote the frozen base weights and the trainable LoRA parameters in chunk $k$,  respectively, leading to a mapping $f_k(\cdot ; W_k + \Delta W_k)$ from the chunk input to output. Let $x$ be the LLM input. Define $\h_0\in\mathbb{R}^{S\times d}$ and $\h_k\in\mathbb{R}^{S\times d}$ as the token embedding and the output of chunk $k$, respectively, with $S$ as the sequence length:
\begin{equation}\label{eq:chunk-split}
  \h_0 = E(x), ~
  \h_k = f_k(\h_{k-1}; W_k +\Delta W_k),
  k = 1,\cdots,K.
\end{equation}

\textbf{Readout Head:} To enable local updates, we introduce a {frozen} local readout head after chunk $k=1,\cdots, K$: 
\begin{equation}\label{eq:local-head}
  \z_k = \mathrm{Norm}(\h_k) W_{\mathrm{lm}}^\top, ~\p_{k,i} = \mathrm{softmax}(\z_{k,i}).
\end{equation}
In \eqref{eq:local-head}, $W_{\mathrm{lm}}$ is the head weight that maps the normalized hidden representation $\mathrm{Norm}(\h_k)$ to vocabulary logits $\z_k\in\mathbb{R}^{S\times V}$, with $V$ being the vocabulary size. Vector $\z_{k,i}$ is the $i$-th row of $\z_k\in\mathbb{R}^{S\times V}$. With $\mathrm{softmax}(\cdot)$,  $ \p_{k,i}\in\mathbb{R}^{1\times V}$ is the predictive distribution of next token  over the  vocabulary space.  


\textbf{Local Objective:} With the output of the readout head, the local loss of each chunk $k$ consists of a task-dependent term $L_{\mathrm{Task}}(\cdot)$ and a consistency term $L_{\mathrm{Consis}}^{k \rightarrow k-1}(\cdot)$:
\begin{equation}\label{eq:belief-kd}
   L_k(\p_k) = \alpha L_{\mathrm{Task}}(\p_k)
     + (1-\alpha) L_{\mathrm{Consis}}^{k \rightarrow k-1}(\p_k), 
\end{equation}
where $\alpha\in(0,1]$ is a weight, and $\p_k=(\p_{k,i}, i\in\Omega)$ with $\Omega$ as the token sequence removing prompt or padding positions. 

(i) Task-dependent term $L_{\mathrm{Task}}$ is a local version of the global objective. It aligns the output of the readout head  of each chunk $k$ to the global ground-truth target:
\begin{equation}\label{eq:task-loss}
 L_{\mathrm{Task}}(\p_k) = -\frac{1}{|\Omega|} \sum_{i \in \Omega}
      D_{\psi}\!\left( \p_{k,i}, \p_y\right),
\end{equation}
where $\p_y\in\mathcal{R}^{1\times V}$ is the one-hot target. Let $\mathcal{P}$ denote the space of $\p$. $D_{\psi}(\cdot)$ is the Bregman divergence induced by a strictly convex and differentiable potential function $\psi: \mathcal{P} \rightarrow \mathbb{R}$, i.e., $D_{\psi}(\u, \v) = \psi(\u) - \psi(\v) - \langle \nabla \psi(\v), \u - \v \rangle$. KL divergence is a specific example of  Bregman divergence by selecting negative entropy as the  potential function. Importantly, since minimizing KL divergence is equivalent to minimizing cross-entropy, \eqref{eq:task-loss} can be replaced with cross-entropy loss in practice.

(ii) Consistency term  aligns the outputs of  chunk $k$ to those of chunk $k-1$, ensuring coherent outputs across chunks.
\begin{equation}\label{eq:ktos}
  L_{\mathrm{Consis}}^{k \rightarrow k-1}(\p_k) = \frac{1}{|\Omega|}
     \sum_{i \in \Omega}
     D_{\psi}\!\left( \p_{k,i}, \mathrm{sg}(\p_{k-1,i})\right),
\end{equation}
where $\mathrm{sg}(\cdot)$ is stop-gradient operator with $\mathrm{sg}(\p)=\p$ and $\nabla\mathrm{sg}(\p)=\boldsymbol{0}$. Similarly, KL divergence can be used in practice. 

\textbf{Fine-Tuning Process:} This process contains $T$ iterations. In each iteration, the low rank matrices $A_l$ and $B_l$ of  layer $l\in\mathcal{L}_k$ are fine-tuned  using stochastic gradient descents:
\begin{equation}\label{eq:AB}
   A_l \leftarrow  A_l - \eta \frac{\partial f_k}{\partial A_l}, ~ B_l \leftarrow B_l -  \eta \frac{\partial f_k}{\partial B_l}, 
\end{equation}
where $\eta$ is the learning rate.  
\begin{remark}
    According to \eqref{eq:AB}, layers within the same chunk are updated using BP, while the updates of layers from different chunks are decoupled and can be executed in parallel. This removes the parallelism bubbles at the algorithm level (see Fig. \ref{fig:framework} (b)) and avoids the need for keeping the activations during the downstream's updates  (see Fig. \ref{fig:framework} (d)).
\end{remark}

\subsection{Theoretical Analysis: Chunk-Wise Performance}\label{subsec:layer}
With chunk-wise performance analysis, we aim to provide insights into how the global objective value changes across model chunks. We focus on one token in the sequence and omit the token subscript. 
Recall that $\p$ is the readout head output of a chunk and is a distribution over vocabulary space. Let $L: \mathcal{P} \rightarrow \mathbb{R}$ denote the global objective given $\p\in\mathcal{P}$, i.e., $L(\p)\triangleq L_{\mathrm{Task}}(\p)$, which is the loss function characterizing how much $\p$ deviates from global ground-truth target. Let $\p_{k}^*$ denote the optimal $\p_{k}$ of chunk $k$ that minimizes the local loss $L_k(\cdot)$, i.e., $\p_{k}^* \triangleq \arg \min_{\p\in \mathcal{P}} L_k(\p)$:  
\begin{lemma}[Local Optimum]\label{lem:p-opt}
   The local optimum $\p_{k}^*$ can be equivalently represented by
\begin{equation}\label{eq:pstar}
    \p_{k}^* \!=\! \arg\min_{\p \in \mathcal{P}} \left\{ \langle \nabla L(\p_{k\!-\!1}), \p \!-\! \p_{k\!-\!1} \rangle \!+\! \frac{1}{\alpha} D_{\psi}(\p, \p_{k\!-\!1}) \right\}.
\end{equation}
\end{lemma}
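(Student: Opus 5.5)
The plan is to show that, with $\p_{k-1}$ held fixed (it enters $L_k$ only through $\mathrm{sg}(\cdot)$), the local loss $L_k(\p)$ equals $\alpha$ times the bracketed objective in \eqref{eq:pstar} plus a constant independent of $\p$. Since $\alpha>0$, the two objectives then have the same minimizer over $\mathcal{P}$.

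\textbf{Sign convention.} I will read the task term for a single token as $L(\p)=D_{\psi}(\p,\p_y)$. This matches the remark that \eqref{eq:task-loss} can be replaced by cross-entropy; with the literal minus sign the local loss would be concave in its first argument, and the $\arg\min$ would not be the intended object. I will state this convention explicitly at the start of the proof.

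\textbf{Gradient of $L$.} Differentiating the definition of the Bregman divergence in its first argument gives
\[
\nabla L(\p)=\nabla\psi(\p)-\nabla\psi(\p_y).
\]

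\textbf{Three-point identity.} For any $\p,\boldsymbol{q}\in\mathcal{P}$, expanding the three divergences from the definition $D_{\psi}(\u,\v)=\psi(\u)-\psi(\v)-\langle\nabla\psi(\v),\u-\v\rangle$ and cancelling terms yields
\[
D_{\psi}(\p,\p_y)=D_{\psi}(\boldsymbol{q},\p_y)+\langle \nabla\psi(\boldsymbol{q})-\nabla\psi(\p_y),\,\p-\boldsymbol{q}\rangle + D_{\psi}(\p,\boldsymbol{q}).
\]
Setting $\boldsymbol{q}=\p_{k-1}$ and using the gradient formula, this becomes the exact expansion
\[
L(\p)=L(\p_{k-1})+\langle\nabla L(\p_{k-1}),\p-\p_{k-1}\rangle + D_{\psi}(\p,\p_{k-1}).
\]
In words, the Bregman divergence is precisely the remainder of the first-order Taylor expansion of $L$ around $\p_{k-1}$.

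\textbf{Substitution into the local loss.} Substituting this expansion into \eqref{eq:belief-kd} gives
\[
L_k(\p)=\alpha L(\p_{k-1})+\alpha\langle\nabla L(\p_{k-1}),\p-\p_{k-1}\rangle+\alpha D_{\psi}(\p,\p_{k-1})+(1-\alpha)D_{\psi}(\p,\p_{k-1}).
\]
The two divergence terms combine to $D_{\psi}(\p,\p_{k-1})$, so
\[
L_k(\p)=\alpha L(\p_{k-1})+\alpha\Big[\langle\nabla L(\p_{k-1}),\p-\p_{k-1}\rangle+\tfrac{1}{\alpha}D_{\psi}(\p,\p_{k-1})\Big].
\]
The term $\alpha L(\p_{k-1})$ does not depend on $\p$, and multiplying by $\alpha>0$ does not change the minimizer, so the two $\arg\min$ problems coincide. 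This gives \eqref{eq:pstar}.

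\textbf{Existence and uniqueness.} Strict convexity of $\psi$ makes $D_{\psi}(\cdot,\p_{k-1})$ strictly convex, and the linear term does not affect this. Hence the minimizer, if it exists, is unique, and $\p_k^*$ is well defined. Existence follows on compact $\mathcal{P}$ such as the simplex, assuming $\psi$ is continuous there.

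\textbf{Main obstacle.} I expect the delicate step to be not the algebra but the domain issues.
\begin{itemize}
\item When $\mathcal{P}$ is the probability simplex, $\nabla\psi$ is defined only up to a component normal to the simplex. This is harmless because $\p-\p_{k-1}$ sums to zero, so such components drop out of the inner products.
\item For the negative-entropy potential, $\nabla\psi(\p_y)$ is undefined at the one-hot target $\p_y$. I would handle this by working with a smoothed target $\p_y$ in the relative interior, or by noting that only $\nabla L(\p_{k-1})$ is needed, and it can be defined directly from the cross-entropy form.
\end{itemize}
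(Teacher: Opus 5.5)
Your proof is correct and follows essentially the paper's argument: you expand $D_{\psi}(\p,\p_y)=D_{\psi}(\p_{k-1},\p_y)+\langle\nabla L(\p_{k-1}),\p-\p_{k-1}\rangle+D_{\psi}(\p,\p_{k-1})$ from the Bregman definition, substitute it into $L_k$, and drop the $\p$-independent term (your positive-sign reading of the task term is also what the paper's proof implicitly uses). One caution on your side remark: for a one-hot $\p_y$, only the smoothed-target fix works, because redefining $L$ as cross-entropy breaks the exact expansion (its first-order remainder is not $D_{\psi}(\p,\p_{k-1})$).
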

\begin{proof}
    Recall the  definition of the Bregman divergence,
\begin{equation}\label{eq:Dpsi}
D_\psi(\u,\v)=\psi(\u)-\psi(\v)-\langle\nabla\psi(\v),\u-\v\rangle.
\end{equation}
Substituting $(\u=\p,\v = \p_{k-1})$  and   $(\u=\p,\v = \p_y)$ into \eqref{eq:Dpsi} respectively and rearranging the former, we have 
$\psi(\p)\!=\!\psi(\p_{k\!-\!1})\!+\!\langle\nabla\psi(\p_{k\!-\!1})\!,\!\p-\p_{k\!-\!1}\rangle\!+\!D_\psi(\p,\p_{k\!-\!1})$, $D_\psi(\p,\p_y)=\psi(\p)-\psi(\p_y)-\langle\nabla\psi(\p_y),\p-\p_y\rangle$. Thus,
\begin{align}\label{eq:D3}
    D_\psi(\p,\p_y) 
   {=} &    \psi(\p_{k-1})-\psi(\p_y)-\langle\nabla\psi(\p_y),\p_{k-1}-\p_y\rangle \notag\\
    &  -\langle\nabla\psi(\p_y),\p-\p_y\rangle +\langle\nabla\psi(\p_y),\p_{k-1}-\p_y\rangle\notag\\
    &   +\langle\nabla\psi(\p_{k-1}),\p-\p_{k-1}\rangle +D_\psi(\p,\p_{k-1})\notag\\
    \overset{(a)}{=} &
     D_\psi(\p_{k-1},\p_y) +\langle\nabla L(\p_{k-1}),\p - \p_{k-1} \rangle \notag\\
    & +D_\psi(\p,\p_{k-1}),
\end{align}
where  (a) holds as the definition of $D_\psi(\p_{k-1},\p_y)$ and  $L(\p_{i-1})$. 

Substituting \eqref{eq:D3} into the local loss function $L_k(\p)$ leads to $L_k(\p) = \alpha D_\psi(\p_{k-1},\p_y) + \alpha  \langle\nabla L(\p_{k-1}),\p - \p_{k-1} \rangle + D_\psi(\p,\p_{k-1})$. 
Given $\p_{k-1}$, term $\alpha D_\psi(\p_{k-1},\p_y)$ is a constant, so it can be omitted in optimization. Proof completes. 
\end{proof}
Lemma \ref{lem:p-opt} represents the local optimum $\p^{\star}$ in a  form of the global objective $L(\cdot)$, connecting local and global objectives. Meanwhile, it demonstrates a balance between moving toward the steepest descent direction of global objective $L(\p)$ and penalizing  deviations from the previous distribution $\p_{k-1}$.

As in many existing convergence analysis on BP or BP-free (e.g., \cite{han2024convergence, shi2026lopt}), we assume the smoothness of global objective. 
\begin{assumption}[Global Objective Smoothness]
\label{ass:bregman-smoothness}
The global objective $L(\cdot)$ is $\beta$-smooth relative to $\psi$ in Bregman geometry:
\begin{equation}
L(\u)\le L(\v) + \langle \nabla L(\v), \u-\v\rangle + \beta D_{\psi}(\u,\v), ~\u,\v\in \mathcal{P}.
\end{equation}
\end{assumption}

Then, we derive the chunk-wise performance of ZeroLock.
\begin{proposition}[Chunk-Wise Performance]
\label{thm:perturbed-descent}
Suppose $\p_{k-1}$, $\p_k^\star$, and $\p_k$ lie in a compact set of the relative interior of the simplex, and \(\psi(\cdot)\) is locally strongly convex. Define $\delta_k \triangleq D_{\psi}(\p_k, \p_k^*)$. Under Assumption \ref{ass:bregman-smoothness} and $\alpha < \frac{1}{\beta}$,
\begin{equation}
\label{equ:telescoping-bound}
    L(\p_K) \!\le\! L(\p_0) \!-\! \sum_{k=1}^K \left(\frac{1}{\alpha}\!-\!\beta \right)D_{\psi}(\p_k, \p_{k\!-\!1}) \!+\! \frac1\alpha \sum_{k=1}^K \delta_k.
\end{equation}
\end{proposition}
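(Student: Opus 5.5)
We prove a one-step inequality for each chunk and then telescope over $k=1,\dots,K$. Fix $k$ and write $\Phi_k(\p)\triangleq\langle \nabla L(\p_{k-1}),\p-\p_{k-1}\rangle+\frac1\alpha D_\psi(\p,\p_{k-1})$. By Lemma~\ref{lem:p-opt}, $\p_k^*$ minimizes $\Phi_k$ over $\mathcal{P}$. The target per-chunk bound is
\begin{equation*}
L(\p_k)\le L(\p_{k-1})-\Big(\frac1\alpha-\beta\Big)D_\psi(\p_k,\p_{k-1})+\frac1\alpha\delta_k .
\end{equation*}
Summing this from $k=1$ to $K$ telescopes the left-hand side and gives exactly \eqref{equ:telescoping-bound}.

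\textbf{Step 1 (smoothness).} Apply Assumption~\ref{ass:bregman-smoothness} with $\u=\p_k$ and $\v=\p_{k-1}$:
\[
L(\p_k)\le L(\p_{k-1})+\langle\nabla L(\p_{k-1}),\p_k-\p_{k-1}\rangle+\beta D_\psi(\p_k,\p_{k-1}).
\]
We rewrite the right-hand side as $L(\p_{k-1})+\Phi_k(\p_k)-(\frac1\alpha-\beta)D_\psi(\p_k,\p_{k-1})$. The claim therefore reduces to showing $\Phi_k(\p_k)\le \frac1\alpha\delta_k$. This is the main obstacle. The quantity $\Phi_k(\p_k)-\Phi_k(\p_k^*)$ is controlled by the inexactness $\delta_k$, and we also need $\Phi_k(\p_k^*)\le\Phi_k(\p_{k-1})=0$.

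\textbf{Step 2 (handling the inexact minimizer).} The map $\Phi_k$ is the sum of a linear function and $\frac1\alpha(\psi-\text{linear})$. Hence for any $\p$,
\[
\Phi_k(\p)-\Phi_k(\p_k^*)=\langle\nabla\Phi_k(\p_k^*),\p-\p_k^*\rangle+\frac1\alpha D_\psi(\p,\p_k^*).
\]
This three-point identity holds exactly, because the Bregman divergence of the linear part vanishes. The standing assumptions say $\p_k^*$ lies in the relative interior, and $\psi$ is locally strongly convex, so the minimizer is unique. First-order optimality on the simplex then gives $\langle\nabla\Phi_k(\p_k^*),\p-\p_k^*\rangle=0$ for all $\p\in\mathcal{P}$, since $\p-\p_k^*$ lies in the tangent space of the simplex and the gradient is orthogonal to it up to a constant vector. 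Taking $\p=\p_k$ yields $\Phi_k(\p_k)=\Phi_k(\p_k^*)+\frac1\alpha\delta_k$. (If only a one-sided variational inequality were available, we would still get $\ge$ in the wrong direction. That is why interiority is needed to obtain equality, which gives the bound with the $\frac1\alpha$ factor.)

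\textbf{Step 3 (nonpositivity of the optimal value).} Since $\p_k^*$ minimizes $\Phi_k$ and $\p_{k-1}\in\mathcal{P}$, we have $\Phi_k(\p_k^*)\le\Phi_k(\p_{k-1})=0$. Combining this with Step 2 gives $\Phi_k(\p_k)\le\frac1\alpha\delta_k$, which completes the per-chunk bound.

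\textbf{Step 4 (telescoping).} Sum the per-chunk bound over $k$. The compactness and local strong convexity assumptions enter only to guarantee that $\nabla\psi$ and $\nabla L$ exist and are finite at $\p_{k-1}$ and $\p_k^*$, and to justify the interior first-order condition. The condition $\alpha<1/\beta$ is used only to ensure that the subtracted divergence terms have a positive coefficient, so the bound is a genuine descent-plus-error statement.
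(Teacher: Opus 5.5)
Your proof is correct and takes essentially the same route as the paper. You apply smoothness at $(\p_k,\p_{k-1})$, then use the interior first-order (KKT) condition for $\p_k^*$ together with the Bregman three-point identity to get the exact $\frac{1}{\alpha}\delta_k$ term, and use minimality of $\p_k^*$ against $\p_{k-1}$ to show the remaining term is nonpositive before telescoping. Bundling the paper's terms (i) and (ii) into your single function $\Phi_k$ is only a tidier bookkeeping of the same computation.
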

\begin{proof}
Substituting $\u=\p_k$ and $\v = \p_{k-1}$ in Assumption \ref{ass:bregman-smoothness}, 
\begin{multline}\label{eq:f-proof1}
L(\p_k) \le  \underbrace{\langle \nabla L(\p_{k-1}),\p_k-\p_k^*\rangle}_{(i)}  + \underbrace{\langle \nabla L(\p_{k-1}),\p_k^*-\p_{k-1}\rangle}_{(ii)}  \\
+ L(\p_{k-1}) + \beta D_{\psi}(\p_k,\p_{k-1}).
\end{multline}

 Since $\p_k^\star$ lies in the relative interior of the probability simplex, the KKT condition of (\ref{eq:pstar}) gives $\nabla L(\p_{k-1})+\frac{1}{\alpha}\bigl(\nabla\psi(\p_k^\star)-\nabla\psi(\p_{k-1})\bigr)+\lambda_k\mathbf 1 = 0$.
Accordingly, term (i) satisfies
\begin{align}
&\alpha \langle \nabla L(\p_{k-1}),\p_k-\p_k^\star\rangle \notag\\ 
\overset{(a)}{=} & \langle \nabla\psi(\p_{k-1})-\nabla\psi(\p_k^\star),\p_k-\p_k^\star\rangle  \notag\\ 
\overset{(b)}{=}& \delta_k+D_{\psi}(\p_k^\star,\p_{k-1})-D_{\psi}(\p_k,\p_{k-1}).
\end{align}
Here, (a) holds due to the  KKT condition and  $\mathbf 1^\top(\p_k-\p_k^\star)=0$, where the latter holds since $\p_k-\p_k^\star$ lies in the simplex tangent space. (b) holds based on Bregman three-point identity.
 According to the definition of $\p_k^*$, term (ii) satisfies 
$\langle \g_k,\p_k^*-\p_{k-1}\rangle + \frac{1}{\alpha}D_{\psi}(\p_k^*,\p_{k-1}) \le 0$. 
Substituting (i) and (ii) into \eqref{eq:f-proof1}
and summing  over $k=1,\dots,K$  complete the proof. 
\end{proof}

Proposition \ref{thm:perturbed-descent} shows that the global loss $L(\p_K)$ of the final chunk tends to decrease as the number of chunks increases, with a bounded error $\sum_{k=1}^{K}\delta_k/\alpha$ resulting from the fine-tuning suboptimality of model chunks due to finite parameters.\footnote{Although a relevant analysis on chunk-wise performance is presented in  \cite{yedepth}, our Proposition \ref{thm:perturbed-descent} generalizes that in \cite{yedepth} (i) from KL divergence to general Bregman divergence and (ii) by explicitly deriving the specific form of the error $\delta_k$ (rather than simply assuming an error).}




\subsection{Theoretical Analysis: Algorithm Convergence}\label{subsec:convg}
We provide the first analytical framework for local objective construction-based BP-free approach under general chunk division. As mentioned earlier, the major challenge comes from characterizing global convergence based on the local updates of each model chunk. To overcome this, we equivalently map local updates to global updates (Lemma \ref{lem:update}); based on this, we prove that the convergence rate of ZeroLock is $\tilde{\mathcal{O}}(1/\sqrt{T})$ (Theorem \ref{thm:stationarity}), differing from $\mathcal{O}(1/\sqrt{T})$ of BP by a polylogarithmic factor.\footnote{$f(T)= \tilde{\mathcal{O}}(g(T))$ means $f(T)=\mathcal{O}(g(T)\log^c T)$ for some constant $c$. The extra $\log T$ is a slow-varying factor  that becomes negligible for large $T$.}

For iteration $t$, let $\thetab^t = (\thetab_k^t,k=0,1,\cdots, K)$ denote the parameters of chunks to be updated. We introduce this notation to generalize the analysis for various scenarios (e.g., LLM, CNN). Following the notation  in Section \ref{subsec:unlok}, $\thetab_0^t$ is the fixed parameters of operator $E(\cdot)$,  and $\thetab_k^t\triangleq \Delta W_k^t + W_k$ for chunk $k$. Let $\p^t_0, \p^t_1, \dots, \p^t_K$ denote the readout head output of the  associated model chunks. For analytical simplicity, we set the Bregman divergence in \eqref{eq:task-loss} and \eqref{eq:ktos} as KL divergence.  Then, the local loss is equivalent  to 
\begin{equation}
L_k^t(\p;\alpha)=\alpha \KL(\p~\|~\p_y) + (1-\alpha) \KL(\p~\|~\p_{k-1}^t).
\end{equation}


\textbf{Mapping from Local to Global Updates:} First, we show the equivalence on local objective as follows. 
\begin{lemma}[Objective Equivalence]\label{lem:equivalence}
    Minimizing $L_k^t(\p;\alpha)$ is equivalent to minimizing $ \KL\left(\p~\|~\p_k^{*,t}(\alpha)\right)$, where 
\begin{equation}\label{eq:pstart}
    \p_k^{*,t}(l; \alpha) = \frac{\p_y(l)^\alpha\left(\p_{k-1}^t(l)\right)^{1-\alpha}}{\sum_{l'=1}^m \p_y(l')^\alpha \left(\p_{k-1}^t(l')\right)^{1-\alpha}},
\end{equation}
with $l$ denoting the $l$-th element of the associated vector and $m$ denoting the length of the vector.
\end{lemma}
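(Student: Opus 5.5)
The plan is to expand both KL terms and regroup them. Once regrouped, the mixture of two log-likelihood-ratio terms becomes a single KL divergence to the normalized geometric mean, plus a constant that does not depend on $\p$.

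\textbf{Expanding the objective.} Writing $\p=(\p(l))_{l=1}^m$ in the simplex, we have
\begin{align*}
L_k^t(\p;\alpha) &= \sum_{l}\p(l)\Bigl[\log \p(l) - \alpha\log \p_y(l) - (1-\alpha)\log \p_{k-1}^t(l)\Bigr] \\
&= \sum_l \p(l)\log\frac{\p(l)}{\p_y(l)^\alpha(\p_{k-1}^t(l))^{1-\alpha}}.
\end{align*}
This step uses only $\alpha+(1-\alpha)=1$ applied to the coefficient of $\sum_l \p(l)\log\p(l)$.

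\textbf{Normalizing.} Let $Z_k^t(\alpha)=\sum_{l'}\p_y(l')^\alpha(\p_{k-1}^t(l'))^{1-\alpha}$, so that $\p_k^{*,t}(l;\alpha)$ equals the numerator divided by $Z_k^t$ as in \eqref{eq:pstart}. Multiplying and dividing inside the logarithm by $Z_k^t$, and using $\sum_l\p(l)=1$, gives
\[
L_k^t(\p;\alpha)=\KL\bigl(\p~\|~\p_k^{*,t}(\alpha)\bigr)-\log Z_k^t(\alpha).
\]
Since $Z_k^t(\alpha)$ depends only on $\p_y$ and $\p_{k-1}^t$, and the latter is treated as fixed through the stop-gradient, the two objectives differ by an additive constant. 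They therefore share the same minimizers over $\mathcal{P}$, which establishes the claimed equivalence. As a by-product, the unique minimizer is $\p_k^{*,t}(\alpha)$ itself, by Gibbs' inequality.

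\textbf{Main obstacle: support issues.} The only delicate point is that $\p_y$ is one-hot, so $\p_y(l)^\alpha=0$ off the target coordinate. For any $\p$ with mass off the target, both $\KL(\p\|\p_y)$ and the right-hand side above are $+\infty$, so the identity still holds in the extended-real sense. Moreover $Z_k^t>0$ whenever $\p_{k-1}^t$ charges the target coordinate, which follows from the standing assumption that distributions lie in the relative interior.

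The case $\alpha=1$ is also covered: the formula reduces to $\p_k^{*,t}=\p_y$ and $Z_k^t=1$. If one prefers to avoid the degeneracy of one-hot targets entirely, a fallback is to work with a smoothed target having full support and pass to the limit. The algebraic identity is otherwise routine.
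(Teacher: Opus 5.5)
Your proposal is correct and follows essentially the same route as the paper: you merge the two KL terms into a single log-ratio against $\p_y^\alpha(\p_{k-1}^t)^{1-\alpha}$, normalize by $Z_k^t(\alpha)$ to get $L_k^t(\p;\alpha)=\KL(\p\,\|\,\p_k^{*,t}(\alpha))-\log Z_k^t(\alpha)$, and note that $Z_k^t(\alpha)$ does not depend on $\p$. Your extra care about the one-hot support (the identity holding in the extended reals, and Gibbs' inequality identifying the minimizer) goes beyond what the paper writes; one small correction is that with one-hot $\p_y$ and full-support softmax $\p_{k-1}^t$, the normalized geometric mean equals $\p_y$ for every $\alpha\in(0,1]$, not only for $\alpha=1$.
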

\begin{proof}
Local loss $L_k^t(\p;\alpha)$ can be equivalently represented as
\begin{equation}
L_k^t(\p;\alpha)=\sum_{l=1}^m \p(l)\log\frac{\p(l)}{\p_y(l)^\alpha\left(\p_k^t(l)\right)^{1-\alpha}}.
\end{equation}
Define $Z_k^t(\alpha)=\sum_{l=1}^m \p_y(l)^\alpha\left(\p_{k-1}^t(l)\right)^{1-\alpha}$. Then, 
\begin{equation}\label{eq:kl-eq}
L_k^t(\p;\alpha)=\KL(\p~\|~\p_k^{*,t}(\alpha))-\log Z_k^t(\alpha).
\end{equation}
Note that for any chunk $k$, vectors $\p_y$ and $\p_{k-1}^t$ are given, so $Z_k^t(\alpha)$ is a constant. Hence, this lemma is proved. 
\end{proof}
Note that $\p_k^{\ast,t}(\alpha)\triangleq(\p_k^{\ast,t}(l;\alpha), l=1,\cdots,m)$ is essentially the optimal solution that minimizes $L_k^t(\p;\alpha)$, with proof omitted here. Thus,   minimizing $ \KL\left(\p~\|~\p_k^{*,t}(\alpha)\right)$ is equivalent to  minimizing  the difference between $\p$ and the optimal solution. 

Then, we can represent global updates in ZeroLock as a form of local updates. We define $\p_k^t=\fhat_k(\p_{k-1}^t;\thetab_k^t)$ as the mapping from the readout head output $\p_{k-1}^t$ of chunk $k$ to $\p_k^t$, given parameter $\thetab_k^t$ of chunk $k$. Although this mapping cannot be directly obtained using the readout head operation (as it is not a one-to-one correspondence), it can be approximated given the statistics of $\p_k^t$ across chunks. Also, this mapping is introduced for theoretical analysis, and it does not need to be obtained in practice. Define the Jacobian matrix  $\J_{\theta_k,k}^t \triangleq \partial \fhat_k(\p_{k-1}^t;\thetab_k^t)/\partial \thetab_k^t$. The stop-gradient operation induces the following Jacobian matrix for all parameters:
\begin{equation}
   \J^t= \operatorname{blkdiag}(\J_{\thetab_1,1}^t, \J_{\thetab_2,2}^t,\cdots, \J_{\thetab_K,K}^t),
\end{equation}
where $\operatorname{blkdiag}(\cdot)$ denotes block-diagonal matrix with all elements outside the diagonal blocks being zero. Define $\e_k^t(\alpha)=\log \p_k^t-\log \p_k^{\ast,t}(\alpha)$, which is the gap between the recent $\p_k^t$ and the optimal $ \p_k^{\ast,t}(\alpha)$ in logarithmic form. 

Based on Lemma \ref{lem:equivalence} and $\p_k^t=\fhat_k(\p_{k-1}^t;\thetab_k^t)$, minimizing $L_k^t(\p;\alpha)$ for all chunks over  dataset $\mathcal{D}$  is equivalent to finding the parameter $\thetab=(\thetab_k, k=0,1,\dots, K)$ that minimizes
\begin{equation}\label{eq:R}
\!\!\!\mathcal R_t(\thetab;\alpha)\!=\!\mathbb E_{(x,y)\sim\mathcal D}\!\!\left[\sum_{k=1}^K\! \KL\!\!\left(\fhat_k(\p_{k\!-\!1}^t;\thetab_k)\|\p_k^{\ast,t}(\alpha)\!\right)\!\right]\!.
\end{equation} 
\begin{lemma}[Global Update Equivalence]\label{lem:update}
    Based on Lemma \ref{lem:equivalence}, the update rule of  ZeroLock can be represented as follows:\footnote{The algorithm in \cite{yedepth} differs from ZeroLock by maintaining an encoder and having its output $c(x)$ as the input for each chunk. To generalize it, we can append a column of  $\J^t_{\thetab_{c},k}$ for $k=1,\cdots,K$ before the columns in $\J^t$, where $\J^t_{\thetab_{c},k}$ is the Jacobian matrix of $f_k^t(\cdot)$ with respect to the parameters $\thetab_{c}$ of the encoder. Under this modification, Lemma \ref{lem:update} still holds.}
\begin{equation}
\label{equ:sid-jacobian-update}
\thetab^{t+1} \leftarrow \thetab^t - \eta_t \nabla_{\thetab} \mathcal R_t(\thetab^t;\alpha),
\end{equation}
where $\nabla_{\thetab} \mathcal R_t(\thetab;\alpha) = (\J^t)^\top \e^t$,  $\e^t\triangleq(\e^t_k, k=0,1,\cdots, K)$.
\end{lemma}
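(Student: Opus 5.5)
The plan is to compute the gradient of $\mathcal R_t(\thetab;\alpha)$ in \eqref{eq:R} chunk by chunk. Then we show that the stacked gradient equals what ZeroLock's per-chunk SGD step \eqref{eq:AB} produces when the KL form of the local loss is used. Two facts drive the argument. By Lemma \ref{lem:equivalence}, each chunk's local loss $L_k^t(\p;\alpha)$ differs from $\KL(\p\,\|\,\p_k^{*,t}(\alpha))$ only by the constant $-\log Z_k^t(\alpha)$, and this constant has no $\thetab$-dependence because of the stop-gradient on $\p_{k-1}^t$. The gradient of chunk $k$'s loss therefore coincides with the gradient of the $k$-th summand of $\mathcal R_t$. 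Moreover, the inputs $\p_{k-1}^t$ and targets $\p_k^{*,t}(\alpha)$ are frozen at iteration $t$, so the $k$-th summand depends only on $\thetab_k$. Hence $\nabla_{\thetab_j}$ of the $k$-th summand vanishes for $j\neq k$. This is exactly why $\J^t$ is block diagonal.

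Next I would compute the chunk gradient by the chain rule: $\nabla_{\thetab_k}\KL(\fhat_k(\p_{k-1}^t;\thetab_k)\,\|\,\p_k^{*,t}) = (\J^t_{\thetab_k,k})^\top \nabla_{\p}\KL(\p\,\|\,\p_k^{*,t})|_{\p=\p_k^t}$. The KL gradient in $\p$ is $\log \p - \log \p_k^{*,t} + \mathbf 1 = \e_k^t + \mathbf 1$. The additive $\mathbf 1$ must be removed. The cleanest argument is that $\fhat_k$ maps into the simplex, so $\mathbf 1^\top \fhat_k \equiv 1$ and thus $\mathbf 1^\top \J^t_{\thetab_k,k}=0$, giving $(\J^t_{\thetab_k,k})^\top\mathbf 1=0$. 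This leaves $(\J^t_{\thetab_k,k})^\top \e_k^t$. Stacking over $k$ and using the block-diagonal structure gives $\nabla_\thetab\mathcal R_t = (\J^t)^\top\e^t$. The $k=0$ block is absent or zero because $\thetab_0$ is fixed, so the component $\e_0^t$ is multiplied by a zero block and can be included harmlessly. The expectation over $\mathcal D$ commutes with differentiation, and the stochastic version replaces it by a minibatch. With step size $\eta_t$, the simultaneous per-chunk updates \eqref{eq:AB} then assemble into \eqref{equ:sid-jacobian-update}.

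I expect the main obstacle to be justifying that differentiating with respect to $\thetab$ through $\fhat_k$ matches the actual LoRA gradient through $\h_k$ and the readout head. The surrogate map $\fhat_k$ is only defined abstractly, and the true dependence runs through $\h_{k-1}$, not $\p_{k-1}$. I would handle this by taking the definition of $\fhat_k$ and $\J^t_{\thetab_k,k}$ as the derivative of $\p_k^t$ with respect to $\thetab_k$ with the upstream input held fixed. That derivative is well-defined and is precisely what backpropagation inside chunk $k$ computes, so the identity holds at the level of the realized iterate $\p_k^t$. A secondary point is to evaluate the gradient at $\thetab^t$, where $\fhat_k(\p_{k-1}^t;\thetab_k^t)=\p_k^t$, so that $\e_k^t$ is indeed the residual defined in the paper.
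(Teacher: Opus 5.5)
Your proposal is correct and takes essentially the same route as the paper. Both apply the chain rule to the KL form of the local loss from Lemma~\ref{lem:equivalence}, drop the $\mathbf 1$ in $\log\p_k^t-\log\p_k^{*,t}(\alpha)+\mathbf 1$ using $\sum_l \p_k^t(l)=1$ (your $\mathbf 1^\top\J^t_{\thetab_k,k}=0$), and stack the per-chunk gradients into the block-diagonal $\J^t$; your remarks on the stop-gradient, the vacuous $k=0$ block, and reading $\J^t_{\thetab_k,k}$ as the within-chunk derivative with the upstream input held fixed spell out points the paper's proof leaves implicit.
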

\begin{proof}
According to \eqref{eq:kl-eq} in Lemma \ref{lem:equivalence}, given output $\p_k^t$,
\begin{equation}\label{eq:L1}
\nabla_{\p_k^t} L_k^t(\p_k^t;\alpha) = \log \p_k^t-\log \p_k^{\ast,t}(\alpha)+\mathbf 1.
\end{equation}
We apply the chain rule. Let $\ell_k^t(\thetab_k^t) \triangleq  L_k^t(\p_k^t)$, and we omit  $\alpha$ for presentation simplicity. For the \(u\)-th coordinate of \(\thetab_k\),
\begin{equation}
    \frac{\partial \ell_k^t(\thetab_k^t)}{\partial\thetab_{k,u}^t} = \sum_{l=1}^m \frac{\partial L_k^t(\p_k^t)}{\partial \p_k^t(l)}  \frac{\partial \p_k^t(l)}{\partial\thetab_{k,u}^t} \overset{(a)}{=} \sum_{l=1}^m \e_k^t(l) \frac{\partial \p_k^t(l)}{\partial\thetab_{k,u}}.
\end{equation}
Equality (a) holds because $\sum_{l=1}^m \frac{\partial \p_k^t(l)}{\partial\thetab_u^t} = \frac{\partial}{\partial\thetab_u^t} \sum_{l=1}^m \p_k^t(l) =  0$,
since  $\sum_{l=1}^m \p_k^t(l)=1$, and \eqref{eq:L1}. Finally, stacking all coordinates and the gradients of all chunks yields this lemma. 
\end{proof}


\textbf{Convergence:}  Define  $\|\boldsymbol a\|_{\mathcal D}^2 \triangleq \mathbb E_{x\sim\mathcal D_X}[\sum_{i=1}^{I}\|a_i(x)\|_2^2]$, $\langle \boldsymbol a,\boldsymbol b\rangle_{\mathcal D} \triangleq \mathbb E_{x\sim\mathcal D_X}[\sum_{i=1}^{I}\langle a_i(x),b_i(x)\rangle]$. Here, $I$ is the size of $\boldsymbol{a}$ and $\boldsymbol{b}$,  $x$ is the input of the model, and $\mathcal{D}_X$ is the distribution of $x$ in dataset $\mathcal{D}$. Let $\p(\thetab) = (\p_k(\thetab)\triangleq\fhat_k(\p_{k-1};\thetab_k), k=0, \cdots, K)$ denote  the readout head output given  $\thetab$. We rewrite $\mathcal{R}_t(\thetab;\alpha) = \mathcal{R}(\thetab;\alpha, \pt^t)$ where $\p^t=\p(\thetab^t)$  for presentation simplicity.  
The following Assumptions \ref{assump:bounded-update-moment}--\ref{assump:refresh-smoothness}  are commonly considered in existing   analysis (e.g., \cite{shi2026lopt,han2024convergence}).  Assumption \ref{ass:stability} is reasonable given  the bounded gradient in Assumption \ref{assump:bounded-update-moment} and the probability simplex space of $\pt^t$.
Assumption \ref{ass:sid-minibatch} is reasonable under simple random sampling and given the bounded gradient.

\begin{assumption}[Bounded Gradient]
\label{assump:bounded-update-moment} The gradient is upper bounded, i.e., $\mathbb E[\|\nabla_{\thetab} \mathcal{R}_t(\thetab;\alpha)\|^2]\le M(\alpha)$.
\end{assumption}

\begin{assumption}[Lipschitz Continuity]
\label{assump:teacher-lipschitz}
Readout output $\p(\thetab)$ is Lipschitz continuous with respect to $\thetab$, i.e.,  $\| \p(\thetab)- \p(\thetab')\|_{\mathcal D}\le \rho \|\thetab- \thetab'\|$, for all $\thetab$ and $\thetab'$ in parameter space.
\end{assumption}

\begin{assumption}[Smoothness]
\label{assump:refresh-smoothness}
For any $\thetab$ in parameter space, $\mathcal{R}_t(\thetab;\alpha)$ is $\beta_{R}(\alpha)$-smooth, i.e., $
    \mathcal R_t( \thetab';\alpha)\le \mathcal R_t( \thetab;\alpha)+\langle \nabla \mathcal R_t(\thetab;\alpha),\thetab'-\thetab \rangle
    +\frac{\beta_{R} (\alpha)}{2}\|\thetab'-\thetab\|^2$.
\end{assumption}
\begin{assumption}[Stability]\label{ass:stability}
The  first-order objective variation is quadratically controlled along the distribution update, i.e., $\left[\left\langle \nabla_{\pt^{t}}\mathcal R(\thetab^{t};\alpha,\pt^{t}),\pt^{t+1}-\pt^t\right\rangle_{\mathcal D} \right]_+ 
    \le C_{\rm fo}(\alpha)\|\pt^{t+1}-\pt^t\|_{\mathcal D}^2$.
\end{assumption}

\begin{assumption}[Sampling]
\label{ass:sid-minibatch}
The mini-batch sampling is unbiased and has a bounded variance, i.e., $||\nabla_{\thetab} \hat{\mathcal R}_t(\thetab;\alpha) - \nabla_{\thetab} \mathcal R_t(\thetab;\alpha)||^2 \leq \sigma^2(\alpha)/B$, where $\nabla_{\thetab} \hat{\mathcal R}_t(\thetab;\alpha)$ is the gradient under sampled mini-batch and $B$ is the mini-batch size. 
\end{assumption}

Define drift $\delta_t^{\mathrm{ref}}(\alpha)\triangleq \left[\mathcal R_{t+1}(\thetab^{t+1};\alpha)-\mathcal R_t(\thetab^{t+1};\alpha)\right]_+$. This drift exists as distribution $\p^t_{k-1}$ from chunk $k-1$ changes, and characterizes the change of $\mathcal{R}_t(\cdot)$ across iterations.
\begin{lemma}[Bounded Drift]
\label{prop:bounded-drift}
The cumulative drift of $\mathcal R_t(\cdot)$ is upper bounded, i.e.,   
    $\mathfrak D_T(\alpha) \triangleq \sum_{t=0}^{T-1}\mathbb E[\delta_t^{\mathrm{ref}}(\alpha)] \le (C_{\rm fo}(\alpha)+\beta_{R}(\alpha)/2)\rho^2M(\alpha)\sum_{t=0}^{T-1}\eta_t^2$.
\end{lemma}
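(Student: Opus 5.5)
We bound each single-step drift $\delta_t^{\mathrm{ref}}(\alpha)$, sum over $t$, and take expectations. The key observation is that $\mathcal R_{t+1}(\thetab^{t+1};\alpha)$ and $\mathcal R_t(\thetab^{t+1};\alpha)$ are evaluated at the \emph{same} parameter $\thetab^{t+1}$. They differ only through the reference distributions: by the rewriting $\mathcal R_t(\thetab;\alpha)=\mathcal R(\thetab;\alpha,\pt^t)$, we have
\begin{equation*}
\delta_t^{\mathrm{ref}}(\alpha)=\left[\mathcal R(\thetab^{t+1};\alpha,\pt^{t+1})-\mathcal R(\thetab^{t+1};\alpha,\pt^{t})\right]_+ .
\end{equation*}
So the drift is a change in the third argument only. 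We control it by a first-order expansion in $\pt$ plus a quadratic remainder, and then convert $\|\pt^{t+1}-\pt^t\|_{\mathcal D}^2$ into a parameter displacement, which the update rule bounds.

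\textbf{Step 1 (expansion in the reference distribution).} Write the increment as the first-order term $\langle \nabla_{\pt}\mathcal R(\thetab^{t};\alpha,\pt^t),\pt^{t+1}-\pt^t\rangle_{\mathcal D}$ plus a remainder. The remainder has two sources:
\begin{itemize}
\item the curvature in $\pt$;
\item the shift of the base point from $\thetab^t$ to $\thetab^{t+1}$ in the gradient.
\end{itemize}
The first-order term is handled directly by Assumption~\ref{ass:stability}, which bounds its positive part by $C_{\rm fo}(\alpha)\|\pt^{t+1}-\pt^t\|_{\mathcal D}^2$.

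For the remainder, the target is a bound of the form $\frac{\beta_R(\alpha)}{2}\|\pt^{t+1}-\pt^t\|_{\mathcal D}^2$. The intended route is to view $\pt$ as a function of $\thetab$ through $\p(\thetab)$ and invoke the smoothness constant $\beta_R(\alpha)$ of Assumption~\ref{assump:refresh-smoothness}, using that $\mathcal R$ is a sum of KL terms whose $\pt$-dependence (through $\log\p_k^{*,t}$) is smooth on the compact interior region. Combining the two pieces and using $[a+b]_+\le[a]_++[b]_+$ gives
\begin{equation*}
\delta_t^{\mathrm{ref}}(\alpha)\le \left(C_{\rm fo}(\alpha)+\tfrac{\beta_R(\alpha)}{2}\right)\|\pt^{t+1}-\pt^t\|_{\mathcal D}^2 .
\end{equation*}

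\textbf{Step 2 (from distributions to parameters).} By Assumption~\ref{assump:teacher-lipschitz}, $\|\pt^{t+1}-\pt^t\|_{\mathcal D}^2\le\rho^2\|\thetab^{t+1}-\thetab^t\|^2$. Lemma~\ref{lem:update} gives $\thetab^{t+1}-\thetab^t=-\eta_t\nabla_{\thetab}\mathcal R_t(\thetab^t;\alpha)$, so $\|\thetab^{t+1}-\thetab^t\|^2=\eta_t^2\|\nabla_{\thetab}\mathcal R_t\|^2$. Taking expectations and applying Assumption~\ref{assump:bounded-update-moment} yields
\begin{equation*}
\mathbb E[\delta_t^{\mathrm{ref}}]\le (C_{\rm fo}+\beta_R/2)\rho^2M\eta_t^2 .
\end{equation*}
If the stochastic update is used instead, Assumption~\ref{assump:bounded-update-moment} is read as bounding the second moment of the actual update direction. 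Summing over $t=0,\dots,T-1$ gives the claim.

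\textbf{Main obstacle.} The delicate step is Step 1's remainder. Assumption~\ref{assump:refresh-smoothness} is stated as smoothness in $\thetab$ with $\pt$ fixed, not in $\pt$. Also, the stability assumption evaluates the gradient at $\thetab^t$, not at $\thetab^{t+1}$. I would first try to absorb the base-point mismatch into the positive-part bound, treating the $\beta_R/2$ term as the quadratic remainder of a Taylor expansion along the path $\pt^t\to\pt^{t+1}$ induced by $\thetab^t\to\thetab^{t+1}$. If that fails, I would interpret $\beta_R(\alpha)$ as also covering joint smoothness in the reference distribution, which is implicitly what the constant in the statement requires.
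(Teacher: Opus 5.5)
Your proposal is essentially the paper's proof. Both derive the per-step bound $\delta_t^{\mathrm{ref}}(\alpha)\le (C_{\rm fo}(\alpha)+\beta_R(\alpha)/2)\|\pt^{t+1}-\pt^t\|_{\mathcal D}^2$ from the stability and smoothness assumptions, then apply Assumption~\ref{assump:teacher-lipschitz}, the update rule of Lemma~\ref{lem:update}, Assumption~\ref{assump:bounded-update-moment}, and sum over $t$.

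The paper justifies that first inequality only by ``substituting $\thetab'=\thetab^{t+1}$ and $\thetab=\thetab^t$'' into Assumption~\ref{assump:refresh-smoothness}, so it leaves the mismatch you flag unresolved too. That mismatch is smoothness in $\thetab$ versus in the reference argument $\pt$, and the gradient base point $\thetab^t$ versus $\thetab^{t+1}$. Your fallback, reading $\beta_R(\alpha)$ as also controlling curvature in $\pt$, is exactly what the stated constant implicitly assumes.
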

\begin{proof}
According to the definition of $\delta_t^{\mathrm{ref}}(\alpha)$, by substituting $\thetab'=\thetab^{t+1}$ and $\thetab=\thetab_t$ in Assumption \ref{assump:refresh-smoothness}, $\delta_t^{\rm ref}(\alpha)\le (C_{\rm fo}(\alpha)+\frac{\beta_{R}(\alpha)}{2})\|\pt^{t+1}-\pt^t\|_{\mathcal D}^2$.
Based on Lipschitz continuity of $\pt^t\triangleq\pt(\thetab^t)$ in Assumption~\ref{assump:teacher-lipschitz}, $\|\pt^{t+1}-\pt^t\|_{\mathcal D}^2\le \rho^2\|\thetab^{t+1}-\thetab^t\|^2$. Let $ C_{\rm ref}\triangleq(C_{\rm fo}(\alpha)+\beta_{R}(\alpha)/2)\rho^2$. Based on Lemma \ref{lem:update} and Assumption~\ref{assump:bounded-update-moment},
$\mathbb E[\delta_t^{\rm ref}]  \leq C_{\rm ref}(\alpha) \mathbb E[\|\thetab^{t+1}-\thetab^t\|^2] 
   =C_{\rm ref}(\alpha)\eta_t^2\mathbb E[\| \nabla_{\thetab} \mathcal R_t(\thetab;\alpha)\|^2]\le C_{\rm ref}(\alpha) M(\alpha)\eta_t^2$. 
Summing over $t=0,\ldots,T-1$ completes the proof. 
\end{proof}

\begin{theorem}[Convergence]
\label{thm:stationarity}
Let $\eta_t= \eta_0/\sqrt{t+\gamma} \le 1/\beta_R(\alpha)$, where $\gamma>0$. Based on Lemma \ref{prop:bounded-drift} and Assumptions~\ref{assump:refresh-smoothness}--\ref{ass:sid-minibatch},
\begin{multline}
    \frac{1}{T} \sum_{t=0}^{T-1}\mathbb E\|\nabla_{\thetab} \mathcal R_t(\thetab;\alpha)\|^2 \le \frac{2(\mathcal R_0-\mathcal R_{opt}+\mathfrak D_T(\alpha))}{T \eta_0/\sqrt{T+\gamma}}\\
    + \frac{\beta_R(\alpha) \sigma(\alpha)^2 \sum_{t=0}^{T-1}\eta_t^2/B}{T \eta_0/\sqrt{T+\gamma}} 
    = \tilde{\mathcal{O}}\left(\frac{1}{\sqrt{T}}\right),
\end{multline}
where $\mathcal{R}_0$ and $\mathcal{R}_{opt} $ are the initial and optimal  $\mathcal{R}(\cdot)$.
\end{theorem}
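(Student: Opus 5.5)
We follow the standard nonconvex SGD descent argument, with one change: the surrogate objective $\mathcal R_t$ itself moves from one iteration to the next, and that motion is exactly what Lemma \ref{prop:bounded-drift} controls.

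\textbf{One-step descent.} Fix $t$. Write the stochastic update from Lemma \ref{lem:update} as
$\thetab^{t+1}=\thetab^t-\eta_t\nabla_{\thetab}\hat{\mathcal R}_t(\thetab^t;\alpha)$.
Apply Assumption \ref{assump:refresh-smoothness} with $\thetab=\thetab^t$ and $\thetab'=\thetab^{t+1}$, and take the conditional expectation given $\thetab^t$. Assumption \ref{ass:sid-minibatch} gives unbiasedness, so the cross term becomes $-\eta_t\|\nabla\mathcal R_t(\thetab^t)\|^2$. It also bounds the second moment by $\|\nabla\mathcal R_t\|^2+\sigma^2(\alpha)/B$. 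This yields
\begin{equation*}
\mathbb E[\mathcal R_t(\thetab^{t+1})]\le \mathbb E[\mathcal R_t(\thetab^t)]-\eta_t\Bigl(1-\tfrac{\beta_R\eta_t}{2}\Bigr)\mathbb E\|\nabla\mathcal R_t(\thetab^t)\|^2+\tfrac{\beta_R\eta_t^2\sigma^2}{2B}.
\end{equation*}
Since $\eta_t\le 1/\beta_R(\alpha)$, the bracket is at least $1/2$.

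\textbf{Changing the reference objective.} Next we convert $\mathcal R_t(\thetab^{t+1})$ into $\mathcal R_{t+1}(\thetab^{t+1})$. By the definition of the drift,
$\mathcal R_{t+1}(\thetab^{t+1})\le \mathcal R_t(\thetab^{t+1})+\delta_t^{\rm ref}(\alpha)$.
Substituting, multiplying by $2$, and rearranging gives
\begin{equation*}
\eta_t\,\mathbb E\|\nabla\mathcal R_t(\thetab^t)\|^2\le 2\,\mathbb E[\mathcal R_t(\thetab^t)-\mathcal R_{t+1}(\thetab^{t+1})]+2\,\mathbb E[\delta_t^{\rm ref}]+\beta_R\eta_t^2\sigma^2/B.
\end{equation*}
Summing over $t=0,\dots,T-1$ makes the first term telescope to $2(\mathcal R_0-\mathbb E\mathcal R_T)\le 2(\mathcal R_0-\mathcal R_{opt})$. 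Here we take $\mathcal R_{opt}$ to be a uniform lower bound on all $\mathcal R_t$; KL terms are nonnegative, so $0$ works. The drift terms sum to $2\mathfrak D_T(\alpha)$.

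\textbf{Normalizing by the step sizes.} The step size $\eta_t=\eta_0/\sqrt{t+\gamma}$ is decreasing, so $\eta_t\ge \eta_0/\sqrt{T+\gamma}$ for all $t<T$. The left-hand side is therefore at least $\frac{\eta_0}{\sqrt{T+\gamma}}\sum_t\mathbb E\|\nabla\mathcal R_t\|^2$. Dividing by $T\eta_0/\sqrt{T+\gamma}$ gives the displayed bound.

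\textbf{Rate.} For the rate, note that
$\sum_{t<T}\eta_t^2=\eta_0^2\sum_{t<T}\frac{1}{t+\gamma}\le \eta_0^2\bigl(\tfrac1\gamma+\log\tfrac{T+\gamma}{\gamma}\bigr)=\mathcal O(\log T)$.
By Lemma \ref{prop:bounded-drift}, $\mathfrak D_T(\alpha)=\mathcal O(\log T)$ as well. Both numerators are therefore $\mathcal O(\log T)$, while the denominator grows like $\eta_0\sqrt T$. This yields $\tilde{\mathcal O}(1/\sqrt T)$.

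\textbf{Main obstacle.} The delicate step is the change of reference objective. Each iteration's descent is measured on $\mathcal R_t$, but the next iteration starts from $\mathcal R_{t+1}$, because the targets $\p^{*,t}_k$ depend on the upstream outputs $\p^t_{k-1}$. The sum telescopes only because the positive part $[\cdot]_+$ in $\delta_t^{\rm ref}$ gives an upper bound in every case. The argument also needs a lower bound on $\mathcal R_T$ that does not depend on $t$, which the nonnegativity of KL provides. Everything else is routine SGD bookkeeping.
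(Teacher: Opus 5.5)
Your proposal is correct and is essentially the paper's own argument: the smoothness descent step for $\mathcal R_t$ under the mini-batch update with $\eta_t\le 1/\beta_R(\alpha)$, the swap $\mathcal R_t(\thetab^{t+1})\ge \mathcal R_{t+1}(\thetab^{t+1})-\delta_t^{\mathrm{ref}}(\alpha)$, telescoping, and Lemma~\ref{prop:bounded-drift} for the accumulated drift. You spell out steps the paper leaves implicit: $\eta_t\ge \eta_0/\sqrt{T+\gamma}$ for the normalization, $\sum_{t<T}\eta_t^2=\mathcal O(\log T)$ for the rate, and nonnegativity of the KL terms to justify a uniform lower bound $\mathcal R_{opt}$.
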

\begin{proof}
For presentation simplicity, we omit notation $\alpha$ in $\mathcal{R}_t(\cdot)$. By Assumption~\ref{assump:refresh-smoothness} and considering update with sampled mini-batch, i.e., $\thetab^{t+1} \leftarrow \thetab^t - \eta^t\nabla \hat{\mathcal R}_t(\thetab^t)$, we have
\begin{multline}
\mathcal{R}_t(\thetab^{t+1})\le\mathcal R_t(\thetab^t )-\eta^t\langle \nabla\mathcal R_t(\thetab^t),\nabla\hat{\mathcal R}_t(\thetab)\rangle\\
+\frac{\beta_R(\alpha)\eta_t^2}{2}\|\nabla\hat{\mathcal R}_t(\thetab^t)\|^2.
\end{multline}
Taking conditional expectation and using Assumption~\ref{ass:sid-minibatch} and $\eta_t\leq 1/\beta_{R}(\alpha)$, we have 
\begin{equation}\label{eq:rtheta1}
\mathbb E[\mathcal R_t(\thetab^{t+1})]\le \mathbb E[\mathcal R_t(\thetab^t)]-\frac{\eta_t}{2}\|\nabla\mathcal R_t(\thetab^t)\|^2+\frac{\beta_R(\alpha)\eta_t^2\sigma(\alpha)^2}{2B}.
\end{equation}
Based on the definition of $\delta_t^{\mathrm{ref}}(\alpha)$, $\mathcal R_t(\thetab^{t+1})\ge \mathcal R_{t+1}(\thetab^{t+1})- \delta_t^{\mathrm{ref}}(\alpha)$. By substituting $\mathcal R_t(\thetab^{t+1})$ into \eqref{eq:rtheta1} and rearranging, 
\begin{multline}
\frac{\eta_t}{2}\mathbb E\|\nabla\mathcal R_t(\thetab^t)\|^2\le \mathbb E[\mathcal R_t(\thetab^t)]-\mathbb E[\mathcal R_{t+1}(\thetab^{t+1})]\\ +\frac{\beta_R(\alpha)\eta_t^2\sigma(\alpha)^2}{2B}+\mathbb E[\delta_t^{\mathrm{ref}}(\alpha)].
\end{multline}
Summing from $t=0$ to $T-1$ and  based on Lemma \ref{prop:bounded-drift} and the definition of $\mathcal{R}_{opt} $, the proof is complete.
\end{proof}

Theorem \ref{thm:stationarity} shows that ZeroLock  algorithm has a convergence rate of $\tilde{\mathcal{O}}(1/\sqrt{T})$, which differs from the convergence rate  ${\mathcal{O}}(1/\sqrt{T})$ of conventional BP by only polylogarithmic factor. Meanwhile, the study in this section provides the first analytical framework for the local objective construction-based approach under general model chunk division. 


\section{System Design}\label{sec:system}
We propose a pipeline training system for ZeroLock. It is applicable to both multi-GPU server and multi-device scenarios. We first present the overview and detail the system design. Then, we discuss the specific design for parallelism at mobile devices.

\begin{figure}
    \centering
    \includegraphics[height=4.1cm]{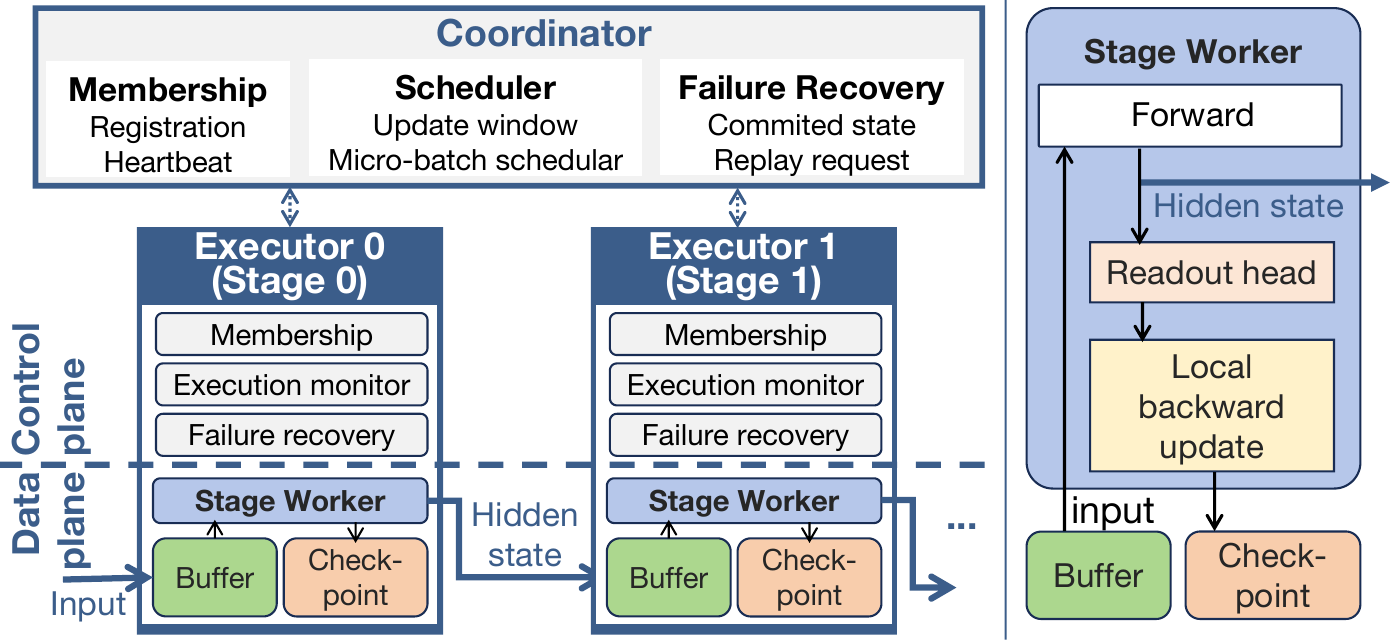}
    \caption{System overview with one coordinator and multiple executors, each corresponding to a stage (i.e., the update of a model chunk).} 
    \label{fig:system}
\end{figure}

\subsection{System Overview}\label{subsec:overview}
As shown in Fig. \ref{fig:system}, this system contains one coordinator and multiple executors (i.e., GPUs or devices), and is separated into control plane and data plane. The  {coordinator} falls in the control plane and has three components: \emph{membership} for  tracking active executors;  \emph{scheduler} for managing the scheduling of micro-batch updates across stages; \emph{failure recovery} for tracking the state of committed requests and assigning replay requests when failure detected. In the \emph{scheduler}, update window defines a contiguous sequence of micro-batches that executors are allowed to and must handle before advancing its process and serving as the unit of checkpoint; micro-batch scheduler indicates the sequence of micro-batches in the update window and will assist  with the micro-batch replay after failure detected.

Each {executor} contains both control plane and data plane. The control plane contains three components:  \emph{membership} for reporting its registration and hearbeat; \emph{execution monitor} for monitoring its execution status; \emph{failure recovery} for execution replay. The data plane at an executor is responsible for handing a stage, i.e., the update of a model chunk. It retrieves either the  model input or the hidden state of the upstream stage from its buffer, performs forward pass to determine its hidden state, transfers the hidden state to the buffer of the downstream stage, and performs a backward update based on its local objective.  
%

We propose the following techniques to achieve lightweight, high-throughput, and robust implementation. \textbf{(I) Early Forwarding.} The forwarding of the hidden state is designed to perform ahead of the local updates, avoiding unnecessary inter-stage waiting. \textbf{(II) Independent Execution and Checkpoint.} Each executor maintains only its own trainable parameters, optimizer state, and checkpoint, enabling recovery at the stage granularity, and  independently  constructs its local signal and perform local updates. \textbf{(III) State-Only Inter-Stage Exchange.} Executors  exchange only the hidden state (but not the gradients or optimizer states) from other stages.  \textbf{(IV) Buffer-Assisted State Exchange.}  Each executor maintains a bounded hidden state buffer, storing the hidden states of its upstream stage for iterations, enabling replay after failure.  Overall, (I)-(III) contribute to lightweight and high-throughput; (II) and (IV) contribute to failure recovery capability. 

\subsection{System Design Details}\label{subsec:detail}
We first present the execution interfaces. Then, we introduce  inter-stage runtime, in-stage execution, and failure recovery.  




\subsubsection{Stage-Execution-Related Interface}
The following defines the unified interfaces related to stage execution at the executors.

\textbf{Input Interface.}  For stage 0 executor, it inputs either token IDs or pre-computed  initial hidden representation (constructed via a local input embedding layer). For each of the subsequent stages, it inputs the hidden states of the upstream stage from the hidden state buffer, filled by its upstream executor. Meanwhile, all stages input the attention mask, position IDs, and labels associated with a specific micro-batch to facilitate local training.

\textbf{Forward-and-Output Interface.} The executor performs forward pass using its chunk to produce its  hidden state, which is then detached from the computation graph and written to the outbound buffer for sending to the downstream stage. 

\textbf{Local Update Interface.} The executor utilizes the input hidden states and labels to construct a local loss and performs a backward pass and optimizer step to update its trainable parameters. This interface supports two ways to construct loss:

(a) Static Model Head. It reuses the pre-trained model's final normalization layer and frozen model head as the readout head, and the token-level cross entropy and task-specific labels are used to construct local loss, as in Section \ref{subsec:unlok}. This design  does not induce new trainable parameters, while it assumes the intermediate hidden states can be interpretable by the final head, which may not hold for complex generative tasks.

(b) Readout Adapter. To bridge the gap between the intermediate hidden states and the frozen model head's readout space, executor can insert a stage-wise readout adapter (i.e., a small residual MLP initialized approximate the identity mapping). During training, this adapter aligns the intermediate hidden space to be more amenable to the frozen model head. 

Besides (a) and (b), this interface also accommodates other lightweight head, e.g., bottleneck projection head or restricted vocabulary head, to reduce computation and memory overhead.


\subsubsection{Inter-Stage Runtime and In-Stage Execution}

The inter-stage runtime (acting as a scheduling engine) organizes multiple micro-batches into an update window at the coordinator. These micro-batches enter the stage pipeline sequentially. For each micro-batch, the stage worker performs \emph{in-stage execution}:
\begin{itemize}
    \item[S1] Input: Retrieve hidden states of the upstream stage from the hidden state buffer; load the attention mask, position IDs, and labels related to the micro-batch. 
    \item[S2] Forward-and-Output: Perform forward pass to produce its hidden state; output it to the outbound buffer.
    \item[S3] Local Training: Backpropagate the local loss and accumulate gradients for the current update window. The optimizer step is applied after the final micro-batch in that window. 
\end{itemize}

Note that downstream execution can start immediately after S2 without waiting for the upstream's local backward in S3, leading to a higher degree of parallelism. 


Across stages, \emph{inter-stage runtime} is responsible for managing the hidden state flow.  The management is in the form of an \emph{entry}, which contains the hidden state tensor of a micro-batch and the related metadata for sending, receiving, and status. The runtime  tracks each entry's transition from the outbound buffer of upstream stage to the hidden state buffer of downstream stage, allowing the coordinator to accurately control the buffer occupancy of each stage without tracking each tensor. Meanwhile, we introduce an in-flight depth to restrict the number of entries between each pair of stages, preventing upstream stages from overwhelming downstream stages.  Furthermore, on GPU, preposted receive is allowed, such that downstream stage can submit $\operatorname{receive\_entry}$ request in advance. Each entry binds to a readiness event. Compute stream waits only for this specific  event, avoiding unrelated communication operations from blocking the compute stream.

\subsubsection{Failure Recovery}\label{subsec:fault}
The failure recovery capability comes from two facts: each stage checkpoints its state independently; the hidden state buffer introduces resilience to failure.  Consequently, after a failure, only the failed stages need to roll back, which retrieves hidden state from the buffer and replaying the requests. 
The brief failure recovery mechanism is as follows:
\begin{itemize}
    \item Proactive Checkpointing:  Each executor periodically captures local checkpoints, including trainable parameters, optimizer states, and progress metadata. 
    \item Reactive Recovery: Upon detecting a failure, the runtime restores the failed stage from its latest checkpoint and replays necessary update windows to catch up to the pipeline frontier. Successful stages retain their execution.
\end{itemize}

\subsection{Mobile Execution Backend}\label{subsec:mobile}

For deployment at mobile devices, we use the  ExecuTorch framework, Meta’s official PyTorch-native edge runtime. It performs  ahead-of-time (AOT) compilation to produce a static computational representation, serialized as a $\texttt{.pte}$ program, which enables deterministic execution on resource-constrained end devices without a Python interpreter. However, ExecuTorch bundles forward and backward computations into a single Python ExecuTorch (PTE) method, where the forward hidden state is exposed only after the local backward gradient computation. This prohibit the early-forwarding opportunity. 

To address this, we insert a lightweight pipeline marker operator within the PTE method—specifically after the detached hidden state output but before the parameter-gradient subgraph. This operator performs no tensor computation; instead, it signals the runtime to capture the hidden outputs and suspend the PTE method while preserving its execution state (stack, tensors, optimizer buffers). We extend the ExecuTorch training runtime with a two-phase Android interface. The first phase executes up to the marker and returns the hidden state, which is immediately transferred to the downstream device. The second phase resumes the same method to complete the local backward pass, after which an on-device AdamW optimizer updates the stage's LoRA parameters. In this case, the transfer to the downstream device overlaps with the local backward computation, making the resulting stage program support local training and early-forwarding without incurring cross-stage gradient Remote Procedure Call (RPC) overhead.



\section{Experiments}\label{sec:experiments}

We build prototype systems using both multi-GPU/CPU server and Android devices respectively. The evaluation on multi-GPU/CPU server shows  (E1) the method comparison on memory and throughput and (E2) failure recovery. The evaluation on Android devices shows (E3) on-device performance.  
 

\subsection{Experimental Settings}
\textbf{Model and data.} We  use TinyLlama as the pre-trained model. Its Transformer layers are partitioned into three consecutive chunks, each assigned to one NVIDIA L40 GPU or one Andriod phone. We apply LoRA  with rank $4$ and scaling factor $16$. On default, experiments are conducted on a fixed 10,000-example subset of AG News; all sequences are padded or truncated to 128 tokens.  Let $b$ denote the number of physical batches processed by one microbatch call; $m$ denotes the number of such microbatch calls before an optimizer update; thus,  batch $B=b m$.  We repeat the experiment with three random seeds. 



\textbf{Methods.} We compare ZeroLock with three approaches. 
(i) GPipe\cite{huang2019gpipe} completes all forward passes in a logical batch before the backward pass. (ii)  1F1B \cite{narayanan2019pipedream} interleaves forward and backward within a logical batch and flushes the pipeline before the shared optimizer update. (iii) PipeDream
 \cite{narayanan2019pipedream} maintains 1F1B execution across update windows via weight stashing (i.e., storing multiple copies), ensuring forward and backward passes use consistent parameter versions.\footnote{Although existing works \cite{guo2024ppll,aljahdali2025fluidpipe,ho2026scpl,shi2026lopt} considered modular decoupling in pipeline parallelism, \cite{ho2026scpl} is not for LLM fine-tuning; \cite{guo2024ppll,aljahdali2025fluidpipe} do not provide open-sourced code; the open-sourced code of \cite{shi2026lopt} does not implement system-level pipeline parallelism (e.g., it places all chunks of a certain batch in one GPU), so comparing it with those deployed in multiple GPUs may not be fair.} 



\subsection{E1:  Memory and Throughput}
Despite the modular update decoupling,  ZeroLock has a comparable accuracy and negative log-likelyhood (NLL) to baseline approaches (see Fig. \ref{fig:e1-curves}). Build upon this, we show its memory reduction and throughput improvement as follows. 
 
\begin{figure}[t]
  \centering
  \includegraphics[width=0.49\columnwidth]{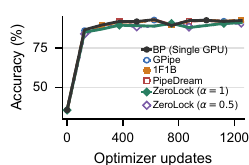}\hfill
  \includegraphics[width=0.49\columnwidth]{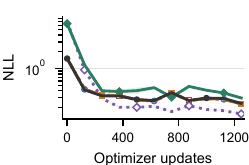}\vspace{-1.0mm}
  \caption{(a) Accuracy and (b) negative-likelihood loss.}
  \label{fig:e1-curves}
\end{figure}

\begin{figure}[t]
  \centering
  
  \includegraphics[width=0.48\textwidth]{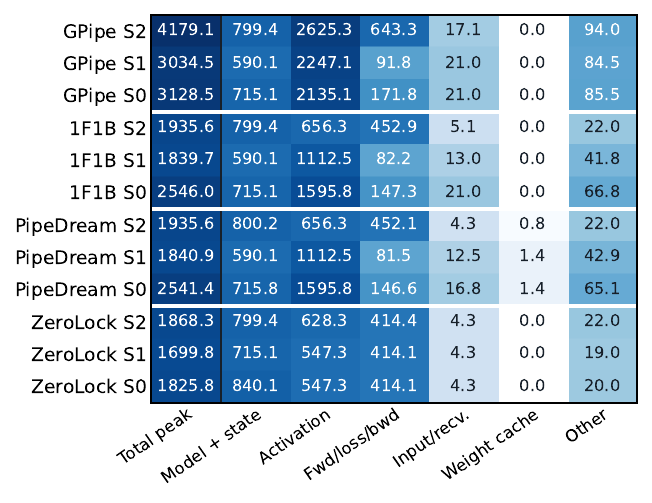}\vspace{-6mm}
  \caption{Peak GPU memory usage and the memory usage of each component under the peak ($b=8$ and $m=4$).}
  \label{fig:memory-map}
\end{figure}

\subsubsection{Memory}
As shown in Fig. \ref{fig:memory-map}, when compared with GPipe,  1F1B, and PipeDream, ZeroLock reduces the mean per-stage peak memory by 47.8\%, 14.7\%, and 14.6\%, respectively, and by 55.3\%, 26.6\%, and 26.5\% for the maximum stage usage, respectively. Most of the reduction comes from the elimination of activations, where the reduction are 75.4\%, 40.8\%, and 48.8\% on per-stage average, respectively. This reduction results from the modular update decoupling of ZeroLock; in comparison, the other approaches implement end-to-end backpropagation, which need to retain activations for backward gradient computation. 
Meanwhile, given $m=4$, GPipe, 1F1B, PipeDream, and ZeroLock experience out-of-memory when $b=10,~20,~20,~28$, respectively, showing that the system with ZeroLock can afford a larger batch size.

Figs. \ref{fig:memory} (a) and (c) show that ZeroLock is more beneficial when the  physical batch to microbatch ratio (i.e., $b/m$) is larger. Specifically, more  physical batches (in a microbatch call) indicates larger forward graphs retained by baseline approaches, leading to more obvious benefits for modular decoupling.

\begin{figure}[t]
    \centering
    \includegraphics[width=0.24\textwidth]{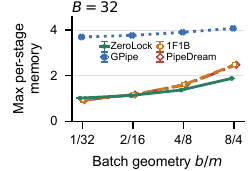}
\includegraphics[width=0.24\textwidth]{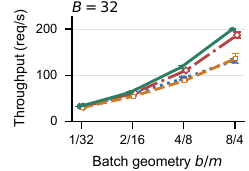}\vspace{-1mm}\\
\quad(a)\quad\qquad\qquad\qquad\qquad\qquad(b)\\
    \includegraphics[width=0.24\textwidth]{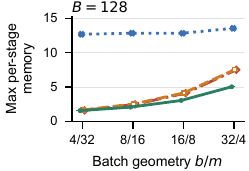}
\includegraphics[width=0.24\textwidth]{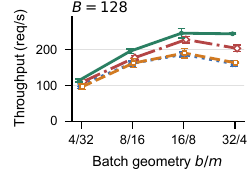}\vspace{-1mm}\\
\quad(c)\quad\qquad\qquad\qquad\qquad\qquad(d)\vspace{-2mm}
\caption{(a) and (c), maximum stage memory usage under $B = 32$ and $B = 128$, respectively;  (b) and (d), throughput under $B = 32$ and $B = 128$, respectively. When $b/m=8/4$, when compared with  PipeDream, ZeroLock reduces the memory by 26.5\% and improves throughput by 4.9\%.}\label{fig:memory}
\end{figure}

\begin{figure}[t]
    \centering
 \includegraphics[width=0.24\textwidth]{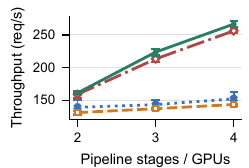}
       \includegraphics[width=0.24\textwidth]{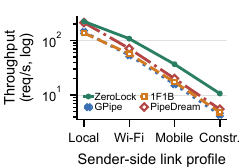}\vspace{-1mm}\\
\quad(a)\quad\qquad\qquad\qquad\qquad\qquad(b) \vspace{-2mm}
  \caption{Throughput under diverse scenarios: (a) number of stages, each executed by a GPU; (b)  diverse sender-side links.
  }\label{fig:throughput}
\end{figure}

\begin{figure}[t]
  \centering
  \includegraphics[width=\columnwidth]{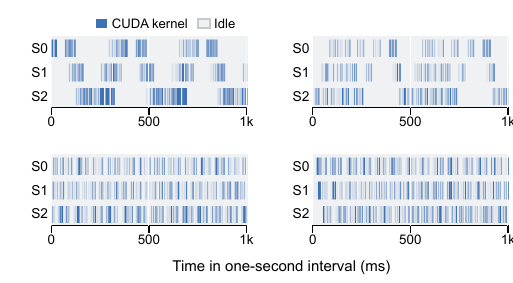}
  \caption{CUDA activity: (a) GPipe; (b) 1F1B; (c) PipeDream; (d) ZeroLock.}
  \label{fig:timeline}
\end{figure}

\subsubsection{Throughput} Figs. \ref{fig:memory} (b) and (d) and Fig. \ref{fig:throughput} show the throughput comparison. The default setting contains three stages, with $b=8$ and $m=4$. First, in Fig. \ref{fig:memory} (b), under the default setting, when compared with GPipe, 1F1B, and PipeDream, ZeroLock improves the throughput by 55.8\%, 62.8\%, and 4.9\%, respectively.  Second, in Figs.  \ref{fig:memory} (b) and (d) and  Fig. \ref{fig:throughput} (a), as the increase of  the physical batches in a microbatch call and stages, ZeroLock and PipeDream demonstrate more significant increase in throughput, showing the better potentials for  reducing bubbles in each call and exploiting additional GPUs, respectively. Finally, Fig. \ref{fig:throughput} (b) shows the throughput under simulated sender-side communication link: Wi-Fi, 2 ms/1000
Mbps (latency/bandwidth); mobile, 10 ms/200 Mbps;  constrained, 30
ms/50 Mbps. ZeroLock is more beneficial when the link is slower. This is because baselines need to send both forward hidden state  and backward hidden gradients, while ZeroLock transfers only forward hidden state. 

Fig. \ref{fig:timeline} visualizes the CUDA activity traces under $b=8$, $m=4$, 512 data samples, and 16 optimizer updates. The plots show a  one-second steady-state interval, during which all three GPUs exhibit compute activity; this interval selection was fixed prior to inspecting method-specific behavior to avoid bias. The Kernel and device copy timestamps are measured externally, without injecting device-side synchronization. As shown in the figure, ZeroLock and PipeDream demonstrate more concurrent stage activities and less GPU idle time. Furthermore, when compared with PipeDream, ZeroLock shows a more balanced activity load across stages and a higher active fraction, with an improvement of 13.7\%. Note that this active fraction serves as a diagnostic indicator of pipeline scheduling rather than a proxy for throughput, as idle periods also encompass host dispatch and Gloo communication overhead.

\begin{table}[t]
  \centering
  \caption{Failure recovery.}
  \label{tab:e5-volatile}
  \small
  \begin{tabular}{lcc}
    \toprule
    Method  & Recovery (ms) & Transfer (MiB) \\
    \midrule
    Synchronous 1F1B  & $2381.2\pm66.6$ & 192 \\
    ZeroLock & $2013.1\pm6.4$ & 96  \\
    \bottomrule
  \end{tabular}
\end{table}

\begin{table}[t]
  \centering
  \caption{On-device evaluation with Andriod phones.}
  \label{tab:e3-devices}
  \setlength{\tabcolsep}{2.5pt}
  \begin{tabular}{lccccc}
    \toprule
    Stage/device & PTE & Queue & Total & PSS  & Temp. \\
    \midrule
    S0/NX809J & 11.43/13.56 & 12.44/17.56 & 38.28/46.90 & 3423.5 & 37.0 \\
    S1/Lenovo L71091 & 7.56/8.32 & 0/0.02 & 15.61/27.51 & 3408.2 & 36.0 \\
    S2/Pixel 10 Pro XL & 9.53/11.40 & 0/5.79 & 10.30/18.99 & 3824.3 & 36.7 \\
    \bottomrule
  \end{tabular}
\end{table}

\begin{figure}[t]
  \centering
\includegraphics[width=0.49\columnwidth]{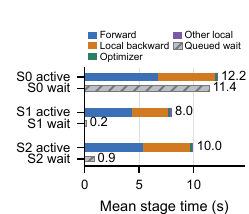}\includegraphics[width=0.49\columnwidth]{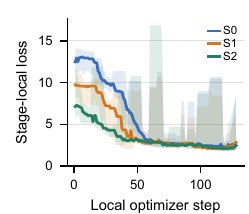}\vspace{-1mm}
\\
\quad(a)\quad\qquad\qquad\qquad\qquad\qquad(b)\vspace{-2mm} 
  \caption{(a) Latency breakdown; (b) Local loss convergence.}
  \label{fig:e3-local-loss}
\end{figure}

\subsection{E2: Failure Recovery}

We evaluate the outage recovery capability of ZeroLock. It is compared with 1F1B. Specifically, 1F1B recovers via global rollback and full replay. In contrast, ZeroLock enables local recovery by reusing buffered hidden states and retaining committed previous updates, eliminating redundant recomputation.  

We consider three stages, and set $b=1$ and $m=8$. The failure setting is as follows: W0--W3, four prelude windows; W4--W7, four failure windows at Stage 1; W8--W11, four  resumed windows. The worker processes and GPU contexts remain alive during the failure.  Recovery latency is defined as the during between  when Stage 1 resumes and when Stage 2 commits W7. In Table \ref{tab:e5-volatile}, ZeroLock reduces the recovery latency by 368 ms and the transfer traffic from 192 to 96 MiB.


\subsection{E3: On-Device Evaluation with Andriod Phones}\label{subsec:android}

For implementation on Android devices, we export three TinyLlama training PTEs containing transformer layers [0,6], [7,13], and [14,21] and deploy them on NX809J, Lenovo L71091, and Pixel 10 Pro XL, respectively. We use 128 data samples for training  with sequence length 128, $b=1$, LoRA rank 8, scale 16, and learning rate $10^{-4}$. Each stage commits 128 optimizer steps and writes nine checkpoints.

 Table \ref{tab:e3-devices} shows the median/95th percentile PTE, queuing, and total duration (in seconds) for a single batch request with the latency break down shown in Fig. \ref{fig:e3-local-loss} (a), the application peak PSS in MiB, and battery temperature in $^\circ C$.  The total wall-clock time of the entire fine-tuning is 1644.1 s with a throughput of 0.0779 records/s. Fig. \ref{fig:e3-local-loss} (b) shows the  loss convergence. These results indicate the implementation is practically feasible.

\section{Conclusion}\label{sec:conclusion}
In this work, we propose a ZeroLock algorithm, which achieves modular update decoupling by local objective construction, for pipeline parallelism in LLM fine-tuning. It effectively removes pipeline bubbles by alleviating the inter-stage waiting and reduces memory usage by mitigating unnecessary activation storage at the algorithm level. We provide the first analytical framework for local objective construction-based approaches under general chunk division and prove that ZeroLock has a convergence rate of $\tilde{\mathcal{O}}(1/\sqrt{T})$, comparable to BP training.  Meanwhile, we design a system for deploying ZeroLock, incorporating techniques such as early forwarding and failure recovery to improve system throughout and robustness.  We build real-world prototype and show that when compared with BP-based baselines, ZeroLock reduces the memory by  26.5\% and improves throughput by  4.9\%. For future direction, it is meaningful to further incorporate operator-level optimization to further improve throughput and reduce memory usage.

\section{Use of AI Disclosure}

ChatGPT was used to assist with code development and experimental setup. Specifically, it was used to configure and troubleshoot the environments required for benchmark reproduction, run and migrate benchmark implementations across different environments, generate Kotlin code for deploying and executing PTE models on mobile devices, and generate batch experiment scripts using torchrun. The AI-generated code and configuration instructions were reviewed, adapted where necessary, tested, and validated by the author. The author takes full responsibility for the final implementation, experimental results, and their interpretation.

\bibliographystyle{IEEEtran}
\bibliography{references_related_work}

\end{document}